\documentclass[11pt]{article}
\usepackage[utf8]{inputenc}
\usepackage[T1]{fontenc}
\usepackage{lmodern}
\usepackage{textcomp}
\usepackage[margin=1in]{geometry}
\usepackage{url}
\usepackage{microtype}
\usepackage{xcolor}
\usepackage{graphicx}
\usepackage{booktabs}
\usepackage{amsmath,amssymb,amsfonts,amsthm}
\usepackage{nicefrac}
\usepackage{bm}
\usepackage{array}
\usepackage{tabularx}
\usepackage[round]{natbib}
\usepackage[bookmarks=true,hypertexnames=false]{hyperref}
\usepackage[nameinlink,noabbrev]{cleveref}

\hypersetup{
  colorlinks=true,
  citecolor=blue,
  linkcolor=blue,
  urlcolor=blue
}

\theoremstyle{plain}
\newtheorem{theorem}{Theorem}[section]
\newtheorem{proposition}{Proposition}[section]
\newtheorem{corollary}{Corollary}[section]
\newtheorem{lemma}{Lemma}[section]

\theoremstyle{definition}
\newtheorem{definition}{Definition}[section]

\theoremstyle{remark}

\DeclareMathOperator*{\argmin}{arg\,min}
\DeclareMathOperator*{\argmax}{arg\,max}

\newcommand{\Xnote}[1]{}
\newcommand{\Mnote}[1]{}
\newcommand{\Lnote}[1]{}
\newcommand{\Tnote}[1]{}

\title{Prediction Under Imperfect Compression: \\ A Theory of Approximate MDL}

\author{
Qian Li\\
Shenzhen Research Institute of Big Data\\
\texttt{liqian.ict@gmail.com}
\and
Xinyu Mao\\
University of Southern California\\
\texttt{xinyumao.tcs@gmail.com}
\and
Shang-Hua Teng\\
University of Southern California\\
\texttt{shanghua@usc.edu}
\and
Guangxu Yang\\
University of Southern California\\
\texttt{guangxuy@usc.edu}
}

\begin{document}

\maketitle

\begin{abstract}
Minimum Description Length (MDL) formalizes the principle of Occam's razor by optimizing the total description length: $L(\mathrm{model})+L(\mathrm{data} \ | \ \mathrm{model})$. For sequential prediction, the MDL method repeatedly selects a model with a minimum objective score of the observed prefix for the next step prediction. Classical MDL prediction theory shows that exact optimization of the MDL objective indeed provides a strong compression guarantee that supports reliable prediction. However, practical machine learning usually can only find models by approximately optimizing the objective function. To bridge this gap, this paper addresses the following fundamental question: \emph{Under what forms of approximation and regularization does approximate MDL still guarantee reliable sequential prediction?} This work offers a principled characterization. We prove that for any approximation with \emph{additive} slack \(C\) of the more general form of the \emph{balanced} MDL objective: 
$\lambda\cdot L(\mathrm{model})+L(\mathrm{data} \ | \ \mathrm{model})$, the cumulative expected squared prediction error is finite for all $\lambda\ge1$. The case $\lambda>1$ is proved by an affinity-telescoping argument, while the boundary case $\lambda=1$ is proved by a likelihood-ratio stopping argument based on exact static MDL bounds. Our results establish that classical MDL regularization remains robust to any fixed additive optimization error. Furthermore, we establish that our characterization of the approximate MDL framework is sharp: When $0<\lambda<1$, overfits can happen to incur infinite cumulative expected error in the universal class of estimable measures, and hence a strong form of model-complexity regularization is necessary. In addition, model selection may fail in every regularized regime $\lambda >0$, under multiplicative approximation, and thus, additive approximation is both sufficient and essential.
\end{abstract}

\section{Introduction}
The Minimum Description Length (MDL) principle formalizes Occam's razor by selecting models that give short descriptions of both the model and the observed data in the model \citep{rissanen1978modeling,rissanen1998stochastic,grunwald2007mdl}. MDL has become a central framework connecting model selection, universal coding, and prediction \citep{grunwald2007mdl, vitanyi2000mdl}. In sequential prediction, the principle takes a particularly natural form: after observing a history \(x_{<t}\), the learner selects a model \(\nu\) that 
%trades off 
balances between complexity of the model and its efficiency for explaining the observed prefix, and then uses the conditional distribution \(\nu(\cdot\mid x_{<t})\) to predict the next symbol.

In its natural form, the MDL objective is
$L(\mathrm{model})+L(\mathrm{data} \ | \ \mathrm{model})$, where $L()$ is a length function for measuring the complexity of descriptions.
More general objectives with a regularization parameter of form
$\lambda\cdot L(\mathrm{model})+L(\mathrm{data} \ | \ \mathrm{model})$
are also commonly considered. Under exact optimization, the MDL framework is now well understood. In the realizable setting over countable model classes, \citet{poland2005asymptotics} analyzed discrete MDL for online prediction and established almost-sure convergence together with finite bounds on the quadratic, Hellinger, and Kullback--Leibler losses. \citet{hutter2009discrete} further showed that, for any countable model class containing the true measure, the model selected by discrete MDL eventually makes essentially the same predictions as the true environment in total variation. More recently, \citet{milovanov2024prediction} provided a rigorous proof that a universal predictor based on exact MDL guarantees a finite cumulative prediction error along any single infinite Martin-Löf random sequence when the regularization parameter is large. Taken together, these works formalize the exact-optimization guarantee underlying MDL prediction: Perfect compression based on exact MDL minimization yields reliable sequential prediction.

Despite these theoretical guarantees, exact optimization is often unrealistic in modern machine learning. In practice, finding the best-compressing model corresponds to solving a model-selection or training problem, where computational constraints often force us to settle for approximate solutions. This widening gap between ideal theory and computational reality presents a fundamental challenge. For example, \citet{minsky2014limits} insightfully highlighted both the appeal of algorithmic probability and the need for practical approximations:
\begin{quote}
\textit{It seems to me that the most important discovery since Gödel was the discovery by Chaitin, Solomonoff, and Kolmogorov of the concept called Algorithmic Probability... it is a beautiful theory... but it’s got one problem, that is, that you cannot actually calculate what this theory predicts because it is too hard, it requires an infinite amount of work. However, it should be possible to make practical approximations... that would make better predictions than anything we have today.}
\end{quote}
Minsky's call for practical approximations, combined with the computational intractability of exact MDL, naturally leads to the central question of this paper:
\begin{center}
\emph{Under what forms of approximation and regularization does approximate optimization of the MDL objective still lead to reliable sequential prediction?}
\end{center}

Formally, we consider the realizable sequential prediction framework over a finite alphabet $\mathcal X$ and a countable class $\mathcal M$ of probability measures on $\mathcal {X} ^\infty$. Following exact MDL prediction theory, we equip $\mathcal M$ with positive weights $(w_\nu)_{\nu\in\mathcal M}$ satisfying $\sum_{\nu\in\mathcal M} w_\nu \le 1$, and write $K_w(\nu):=-\log_2 w_\nu$. Given a history $x_{<t}$, we assign model $\nu$ the regularized MDL score
\[
L_{\lambda,w}(\nu;x_{<t})
:=
\lambda K_w(\nu)+\log_2 \nu(x_{<t})^{-1},
\]
where $\lambda > 0$ is the parameter that controls the strength of the %"model complexity" 
regularization. At time $t=1,2,\cdots$, the predictor selects a model $\tilde{\nu}_t$ whose score is approximately minimal, either up to an additive slack $C$ [\emph{additive approximation}] or up to a multiplicative factor $(1+\varepsilon)$ [\emph{multiplicative approximation}], and then predicts $x_t$ with $\tilde{\nu}_t(\cdot\mid x_{<t})$. Our goal is to characterize whether and for what $\lambda$ value the approximate MDL framework still preserves the classical guaranties from-compression-to-prediction.

\subsection{Our contributions}

We answer this fundamental question by identifying a sharp regularization threshold for additive approximation as well as establishing a separation between additively-approximate MDL and multiplicatively-approximate MDL. Formal proofs are given in Section~\ref{sec:additive}, Section~\ref{sec:multi}, and the Appendix.

First, we show that additive approximation is fundamentally reliable, provided $\lambda \geq 1$.

\begin{theorem}[Additive Approx-MDL with Strong Regularization]
\label{thm:intro-positive}
Let \(\mathcal M\) be a countable
class of probability measures on \(\mathcal X^\infty\), equipped with positive
weights \((w_\nu)_{\nu\in\mathcal M}\) satisfying
\(\sum_{\nu\in\mathcal M}w_\nu\le1\). Let \(\mu\in\mathcal M\) be the true
environment. Fix \(C\ge0\). If \(\lambda\ge1\), then every additive-\(C\)
Approx-MDL predictor  \(\rho\) has finite cumulative
expected squared prediction error. More precisely,
\[
\sum_{t\ge1}\mathbb E_\mu\!\left[
\sum_{a\in\mathcal X}
\Bigl(\mu(a\mid X_{<t})-\rho_t(a\mid X_{<t})\Bigr)^2
\right] \le 16K_{\rm PH}\lambda\,2^{C/\lambda}w_\mu^{-1},
\]
Here \(K_{\rm PH}\) is the constant in the exact static MDL quadratic-loss bound of \citep{poland2005asymptotics}; with the normalization used in that bound, one may take \(K_{\rm PH}=21\).
\end{theorem}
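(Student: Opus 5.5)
The plan is to reduce everything to the \emph{exact static} MDL quadratic-loss bound of \citet{poland2005asymptotics}, paying for the slack $C$ and the regularization $\lambda$ only through a reweighting of the prior. The starting observation is that the additive-$C$ condition at time $t$ is a likelihood-ratio inequality. Since $\mu$ is a competitor, the selected $\tilde\nu_t$ satisfies
\[
\lambda K_w(\tilde\nu_t)+\log_2\tilde\nu_t(x_{<t})^{-1}\le \lambda K_w(\mu)+\log_2\mu(x_{<t})^{-1}+C ,
\]
which is equivalent to
\[
w_{\tilde\nu_t}^{\lambda}\,\tilde\nu_t(x_{<t})\;\ge\;2^{-C}\,w_\mu^{\lambda}\,\mu(x_{<t}).
\]
Taking $1/\lambda$-th powers gives
\[
w_{\tilde\nu_t}\Bigl(\tfrac{\tilde\nu_t(x_{<t})}{\mu(x_{<t})}\Bigr)^{1/\lambda}\ge 2^{-C/\lambda}w_\mu .
\]
So every model the predictor may use at time $t$ lies in the random set
\[
S_t:=\bigl\{\nu:\ w_\nu (\nu/\mu)(x_{<t})^{1/\lambda}\ge 2^{-C/\lambda}w_\mu\bigr\}.
\]
It therefore suffices to bound, uniformly over all selection rules, the expected squared error of an arbitrary member of $S_t$. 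This is where the factor $2^{C/\lambda}w_\mu^{-1}$ comes from.

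\textbf{Case $\lambda>1$ (affinity telescoping).} Bound the squared error of $\nu$ at step $t$ by a constant times $1-\sum_a\mu(a\mid x_{<t})^{1-1/\lambda}\nu(a\mid x_{<t})^{1/\lambda}$, a R\'enyi/Hellinger-type gap whose comparison constant scales like $\lambda$. Then consider the nonnegative process
\[
Z_t:=\sum_\nu w_\nu\,(\nu/\mu)(x_{<t})^{1/\lambda}.
\]
By concavity of $y\mapsto y^{1/\lambda}$, $Z_t$ is a $\mu$-supermartingale with $Z_1\le1$. Moreover, each $\nu$ loses, in conditional expectation, exactly its weighted affinity gap at each step. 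Telescoping gives
\[
\mathbb E_\mu\sum_t\sum_\nu w_\nu(\nu/\mu)(X_{<t})^{1/\lambda}\cdot(\text{gap}_t(\nu))\le1 .
\]
On the event $\tilde\nu_t=\nu\in S_t$ the prefactor is at least $2^{-C/\lambda}w_\mu$. Hence the error of the selected model is at most $2^{C/\lambda}w_\mu^{-1}$ times a term whose total sum is $\le1$. Combined with the $\lambda$-dependent comparison constant, this yields a bound of the form $O(\lambda)\,2^{C/\lambda}w_\mu^{-1}$, which is within the claimed $16K_{\rm PH}\lambda 2^{C/\lambda}w_\mu^{-1}$.

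\textbf{Case $\lambda=1$ (boundary).} Here the exponent is $1$, the supermartingale $Z_t$ is the Bayes-mixture ratio, and the per-step gap degenerates, so telescoping alone gives nothing. Instead I would invoke the exact static MDL bound. Define modified weights by $w'_\mu:=2^{-C}w_\mu$ and $w'_\nu:=w_\nu$ for $\nu\ne\mu$; these still sum to at most $1$. Under $w'$, every model in $S_t$ weakly beats $\mu$ in the MDL score.

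The Poland--Hutter argument controls the total error of \emph{any} model that dominates $\mu$'s two-part score on the current prefix, not only the argmin. To make this precise, I would re-run it as a stopping argument. For each $\nu$, let $\tau_\nu$ be the first time $w'_\nu\nu(x_{<t})\ge w'_\mu\mu(x_{<t})$. Ville's inequality bounds $\mu(\tau_\nu<\infty)\le w'_\nu/w'_\mu$. The static quadratic bound, applied on $\{t\ge\tau_\nu\}$, then gives total error at most $K_{\rm PH}(w'_\mu)^{-1}=K_{\rm PH}2^{C}w_\mu^{-1}$, which is within the stated bound.

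\textbf{Main obstacle.} The hardest step is the dynamic-versus-static issue in the boundary case. The predictor may switch models at every step, and the static bound is per-model. I would handle this by conditioning on $\tilde\nu_t$ and summing the per-model stopped contributions with the weights $w'_\nu$. I expect this to be where the extra factor (the $16$ in the statement) is lost, and where care with the $\sum w\le1$ normalization is needed.
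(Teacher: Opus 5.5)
Your $\lambda=1$ argument is essentially the paper's Lemma~\ref{lem:threshold-lr-loss}. It uses the first crossing time, then exact static MDL on the two-model class $\{P^x,Q^x\}$ with weights $1$ and $L_\tau/c$; in that class, dominating the truth is the same as being the argmin. One correction: the post-$\tau$ static bound is $K_{\rm PH}(1+L_\tau/c)$, so the overshoot must be controlled by optional stopping, $\mathbb E_\mu[L_\tau\mathbf 1\{\tau<\infty\}]\le1$, and not only by Ville's bound on $\Pr(\tau<\infty)$. With that, you get $2K_{\rm PH}2^Cw_\mu^{-1}$, which is fine.

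The genuine gap is the case $\lambda>1$. Your weighted telescoping with $\alpha=1/\lambda$ is valid. However, the comparison constant of Lemma~\ref{lem:affinity-to-l2} is $\frac{2}{\alpha(1-\alpha)}=\frac{2\lambda^2}{\lambda-1}$, not $O(\lambda)$. This is intrinsic to the step: for $p=(\tfrac12,\tfrac12)$ and $q=(\tfrac12+\delta,\tfrac12-\delta)$, the affinity gap is $2\alpha(1-\alpha)\delta^2+O(\delta^4)$ while the squared loss is $2\delta^2$. So your argument yields $\frac{2\lambda^2}{\lambda-1}2^{C/\lambda}w_\mu^{-1}$. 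This diverges as $\lambda\downarrow1$, and with $K_{\rm PH}=21$ it already exceeds $16K_{\rm PH}\lambda2^{C/\lambda}w_\mu^{-1}$ for $1<\lambda<168/167$.

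The degeneration you noticed at $\lambda=1$ sets in continuously, so your two cases do not glue into a $\lambda$-uniform bound. Your $\lambda=1$ argument also does not transfer to $\lambda$ slightly above $1$. The selection threshold there is $c_\nu=2^{-C}r_\nu^{-\lambda}$ with $r_\nu=w_\nu/w_\mu$, so the stopping bound gives $2K_{\rm PH}2^Cr_\nu^\lambda$ per model. The sum $\sum_\nu r_\nu^\lambda$ can be of order $w_\mu^{-\lambda}$ because of models heavier than $\mu$, and the slack factor is $2^C$ instead of $2^{C/\lambda}$.

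The missing idea is to combine the two bounds model by model.
\begin{itemize}
\item Run the affinity telescoping with a strictly subcritical exponent $\alpha=p/\lambda$, with $p<1$ fixed (the paper takes $p=\tfrac12$). Then the constant $\frac{2\lambda^2}{p(\lambda-p)}$ stays bounded near $\lambda=1$. Per model this gives $A_pr_\nu^p$ with $A_p=\frac{2\lambda^2}{p(\lambda-p)}2^{pC/\lambda}$. This is good for heavy models but not summable in general, since $\sum_\nu w_\nu^p$ may diverge.
\item The likelihood-ratio stopping bound gives $Br_\nu^\lambda$ with $B=2K_{\rm PH}2^C$, which is good for light models.
\end{itemize}
Taking the better of the two for each $\nu$ gives
\[
\min\{A_pr^p,Br^\lambda\}\le A_p^{\frac{\lambda-1}{\lambda-p}}B^{\frac{1-p}{\lambda-p}}\,r .
\]
Summing over $\nu$ with $\sum_\nu r_\nu\le w_\mu^{-1}$ gives a bound proportional to $w_\mu^{-1}$. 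The slack factors combine to exactly $2^{C/\lambda}$, and the constant is at most $16K_{\rm PH}\lambda$ for all $\lambda\ge1$.

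This is why the paper uses the per-model threshold template \eqref{eq:threshold-template} rather than a single weighted supermartingale like your $Z_t$. The choice between the two bounds has to be made separately for each $\nu$, or at least after splitting models according to $r_\nu$.
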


We also show that the dependence in Theorem~\ref{thm:intro-positive} is essentially worst-case sharp. 
\begin{proposition}[Near-sharpness of the additive positive bound]\label{prop:additive-positive-sharpness}
There exists an absolute constant \(c>0\) such that the following holds.
For any \(\lambda\ge1\), \(C\ge0\), and \(w\in(0,1)\), there exist a finite
binary model class \(\mathcal M\), weights satisfying
\(\sum_{\nu\in\mathcal M}w_\nu\le1\), a true environment
\(\mu\in\mathcal M\) with \(w_\mu=w\), and a valid additive-\(C\)
Approx-MDL predictor \(\rho\) such that
\[
\sum_{t\ge1}\mathbb E_\mu\!\left[
\sum_{a\in\mathcal X}
\Bigl(\mu(a\mid X_{<t})-\rho_t(a\mid X_{<t})\Bigr)^2
\right]
\ge
c\lambda\Bigl(2^{C/\lambda}(w_\mu^{-1}-1)-1\Bigr)_+,
\]
where \(z_+:=\max\{z,0\}\).
\end{proposition}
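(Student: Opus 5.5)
We build the example on the binary alphabet. The true environment \(\mu\) is the deterministic all-zeros measure, and we add a family of ``soft impostors''. Each impostor is close enough to the optimal score that an additive-\(C\) predictor may legally select it, and each costs a constant squared error per step for about \(\lambda\) steps. Because \(\mu(0^{t-1})=1\) for every \(t\), the expectation \(\mathbb E_\mu\) is just evaluation along the single path \(0^\infty\). All the work is therefore combinatorial bookkeeping of scores along that path.

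\emph{Construction.} Put \(w_\mu=w\) and \(m:=\lfloor\lambda\rfloor\ge1\). Choose the impostor weight \(v:=2w\,2^{-C/\lambda}\), so that \(\lambda K_w(\nu)=\lambda K_w(\mu)+C-\lambda\) for each impostor, and let \(N:=\lfloor (1-w)/v\rfloor\). This gives \(w+Nv\le1\) and \(N\ge \tfrac12 2^{C/\lambda}(w^{-1}-1)-1\). For \(i=1,\dots,N\), let \(\nu_i\) be the measure that emits \(0\) deterministically at times \(t\le (i-1)m\), emits fair coin flips at times \((i-1)m<t\le im\), and emits \(0\) deterministically afterwards. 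Each \(\nu_i\) is a valid probability measure. If \(N=0\), the right-hand side of the claimed bound is \(\le0\) for a suitable \(c\), and there is nothing to prove. This is exactly why the bound carries the \((\cdot)_+\) and the \(-1\).

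\emph{Validity of the predictor.} At time \(t\in((i-1)m,im]\), the predictor \(\rho\) selects \(\tilde\nu_t=\nu_i\); after time \(Nm\) it selects \(\mu\). Along \(0^\infty\) the prefix \(0^{t-1}\) has \(\nu_i\)-probability \(2^{-(t-1-(i-1)m)}\), and \(t-1-(i-1)m\le m-1\le\lambda\). Hence
\[
L_{\lambda,w}(\nu_i;0^{t-1})\le \lambda K_w(\mu)+C-\lambda+\lambda = L_{\lambda,w}(\mu;0^{t-1})+C,
\]
which is at most \(\min_\nu L_{\lambda,w}(\nu;0^{t-1})+C\), since the minimum is at most \(\mu\)'s score. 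So \(\rho\) is a valid additive-\(C\) Approx-MDL predictor. Since \(\mu\)-almost every history is \(0^{t-1}\), this check suffices. Off the \(\mu\)-support the predictor's choices can be arbitrary, say exact MDL.

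\emph{Error count.} During \(\nu_i\)'s active block, the prediction is \((1/2,1/2)\) while the truth is \((1,0)\). That gives per-step squared error \(\tfrac14+\tfrac14=\tfrac12\), so the block contributes \(m/2\ge\lambda/4\), using \(\lfloor\lambda\rfloor\ge\lambda/2\) for \(\lambda\ge1\). Summing over blocks, the total is at least
\[
\frac{\lambda}{4}N\ge \frac{\lambda}{4}\Bigl(\tfrac12 2^{C/\lambda}(w^{-1}-1)-1\Bigr)\ge \frac{\lambda}{8}\Bigl(2^{C/\lambda}(w^{-1}-1)-1\Bigr)-\frac{\lambda}{8}.
\]
Tidying the constants, for example by using the factor \(2\) slack or taking \(c=1/16\) and treating small \(N\) separately, yields \(c\lambda(2^{C/\lambda}(w^{-1}-1)-1)_+\).

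The main obstacle is the constant bookkeeping that turns \(N\ge \tfrac12X-1\), where \(X:=2^{C/\lambda}(w^{-1}-1)\), into a clean \(c\lambda(X-1)_+\). When \(X\) is between \(1\) and about \(4\), \(N\) may be \(0\), and the construction must then still produce error \(\gtrsim\lambda(X-1)\). I would handle this regime by shrinking the slack reserved per impostor, taking \(v=w2^{-C/\lambda}2^{s}\) with a tunable \(s\le1\) and a block length of about \(s\lambda\) at a biased coin, so that one impostor fits whenever \(X>1\). Alternatively, one can simply note that \(c\) may be chosen small enough to absorb this bounded range.
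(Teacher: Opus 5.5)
Your overall design (deterministic all-zero truth, fair-coin impostors whose prior advantage pays for about $\lambda$ mistakes) is the paper's. The argument breaks at the validity check. From $L_{\lambda,w}(\nu_i;0^{t-1})\le L_{\lambda,w}(\mu;0^{t-1})+C$ and $\min_\nu L_{\lambda,w}(\nu;0^{t-1})\le L_{\lambda,w}(\mu;0^{t-1})$, nothing follows about $L_{\lambda,w}(\nu_i;0^{t-1})-\min_\nu L_{\lambda,w}(\nu;0^{t-1})$. You need a \emph{lower} bound on the minimum, i.e.\ control of every competitor, including the other impostors.

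With consecutive blocks this fails. Whenever $C<\lambda$, your impostor weight $v=2w2^{-C/\lambda}$ exceeds $w$. So every $\nu_j$ with $j>i$ that has not started its noisy block scores $\lambda K_w(\mu)+C-\lambda$, strictly below $\mu$. The active $\nu_i$ after $s$ noisy steps scores exactly $s$ more, so validity forces $s\le C$. This is violated as soon as $C<\lfloor\lambda\rfloor-1$ and $N\ge2$. For instance, with $\lambda=2$, $C=0$, $w=1/10$ you get $v=1/5$ and $N=4$. At $t=2$ the model $\nu_1$ scores $2\log_2 5+1$ while $\nu_2$ scores $2\log_2 5$, so $\nu_1$ is not an exact minimizer. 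This regime (e.g.\ $C=0$, large $\lambda$) is exactly where obtaining the factor $\lambda$ is nontrivial.

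The missing idea is the paper's round-robin schedule: let $\nu_i$ make its fair-coin mistakes at times $(j-1)N+i$, $j=1,\dots,k$. Then at each designated time the selected impostor has made no more mistakes than any other impostor, so it has the best impostor score. Comparison with $\mu$ then really does suffice; this is the paper's check $A_j\ge2^{-C}\max\{1,A_j\}$. With your weight, $(v/w)^\lambda2^{-(j-1)}=2^{\lambda-C-(j-1)}\ge2^{-C}$ permits $k=\lfloor\lambda\rfloor+1$ rounds, which restores (indeed slightly improves) your count. Selecting $\mu$ after the last block has the same flaw, e.g.\ for $C=0$ and non-integer $\lambda$; use an exact minimizer there.

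The small-$X$ regime is also mishandled. Your claim that $N=0$ makes the right-hand side $\le0$ is false. $N=0$ exactly when $X<2$, and for $X\in(1,2)$ the target $c\lambda(X-1)$ is positive and grows linearly in $\lambda$. So no absolute $c$ can ``absorb this bounded range'' as your last sentence suggests.

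Your other suggestion does work once made precise, and it needs no biased coin. Take a single impostor of weight $1-w$, so that $((1-w)/w)^\lambda=2^{-C}X^\lambda$. Let it predict a fair coin at $k=\lfloor1+\lambda\log_2X\rfloor$ consecutive times. The score at time $t$ is computed on $x_{<t}$, so before its $j$-th mistake its weighted-likelihood ratio to $\mu$ is $2^{-C}X^\lambda2^{-(j-1)}\ge2^{-C}$. Every selection is therefore valid, and the error is at least $k/2\ge\frac{\lambda}{2}\log_2X\ge\frac{\lambda}{3}(X-1)$ for $X\in(1,4]$. This is the paper's first subcase; combined with the round-robin construction for larger $X$, it gives the proposition.
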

The proof of Proposition \ref{prop:additive-positive-sharpness} is given in Appendix~\ref{app:proof-cor-universal}. This general bound in Theorem \ref{thm:intro-positive} immediately yields a guarantee for universal sequence prediction. By specializing to the universal model class and complexity-based priors, we show that the single-step prediction error must vanish over time.

\begin{corollary}[Universal induction under additive Approx-MDL]
\label{cor:intro-universal}
Consider the universal prediction setting where $\mathcal M$ is the class of all estimable probability measures and $w_\nu = 2^{-K(\nu)}$
(where $K(\nu)$ is the Kolmogorov complexity of $\nu$, see
Definition 
\ref{def:measure-complexity}). If the true environment $\mu \in \mathcal{M}$ is estimable and $\lambda \geq 1$, then any valid additive-$C$ Approx-MDL predictor successfully achieves universal induction. That is, its expected single-step squared prediction error converges to zero as $t \to \infty$.
\end{corollary}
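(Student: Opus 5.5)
The plan is to obtain Corollary~\ref{cor:intro-universal} directly from Theorem~\ref{thm:intro-positive}, by checking its hypotheses in the universal setting and then passing from a summable series to vanishing terms.

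\textbf{Step 1: the universal class fits the hypotheses.} The class of estimable measures is countable, since each estimable measure is specified by a program computing it. With $w_\nu = 2^{-K(\nu)}$ and $K$ taken as prefix complexity (Definition~\ref{def:measure-complexity}), Kraft's inequality gives
\[
\sum_{\nu} 2^{-K(\nu)} \le 1 .
\]
Here distinct measures receive distinct shortest programs drawn from a prefix-free set. Every weight is strictly positive because $K(\nu)<\infty$. Finally, the true environment $\mu$ is estimable by assumption, so $\mu \in \mathcal M$ and $w_\mu = 2^{-K(\mu)} > 0$.

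\textbf{Step 2: apply Theorem~\ref{thm:intro-positive}.} For $\lambda \ge 1$ and any additive-$C$ Approx-MDL predictor $\rho$, the theorem bounds the cumulative error:
\[
\sum_{t\ge1} e_t \le 16K_{\rm PH}\lambda\,2^{C/\lambda}\,2^{K(\mu)} < \infty,
\qquad
e_t := \mathbb E_\mu\!\left[\sum_{a\in\mathcal X}\bigl(\mu(a\mid X_{<t})-\rho_t(a\mid X_{<t})\bigr)^2\right].
\]

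\textbf{Step 3: conclude.} Each $e_t$ is nonnegative and the series $\sum_t e_t$ converges. The terms of a convergent series of nonnegative reals tend to zero, so $e_t \to 0$ as $t\to\infty$.

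This already gives convergence of the expected single-step error. A Borel--Cantelli-type argument would further give $\sum_t \sum_a(\mu-\rho_t)^2 < \infty$ $\mu$-almost surely, hence almost-sure convergence, but the statement does not require it.

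\textbf{Main obstacle.} The only delicate point is Step 1, making sure the weight normalization matches the theorem's requirement. Some conventions for the complexity of a measure are not automatically prefix-free. If that happens, I would replace $K$ by the prefix version, which changes $K$ by only an additive constant. Alternatively, I would rescale the weights by a constant factor to restore $\sum_\nu w_\nu \le 1$. Either adjustment alters the bound by a constant factor only, so finiteness of the sum, and hence the conclusion $e_t\to0$, is unaffected.
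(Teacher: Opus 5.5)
Your proposal is correct and follows the paper's proof essentially verbatim: you check that the universal class with $w_\nu=2^{-K(\nu)}$ meets the hypotheses of Theorem~\ref{thm:intro-positive} (Kraft's inequality and $\mu\in\mathcal M$), conclude $S_\infty(\mu,\rho)<\infty$, and deduce $e_t(\mu,\rho)\to0$ because a convergent series of nonnegative terms has vanishing terms. As a minor aside, the almost-sure statement you mention follows directly from Tonelli's theorem, since the expectation of the random sum equals $\sum_t e_t(\mu,\rho)<\infty$, so no Borel--Cantelli argument is needed.
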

The proof of Corollary \ref{cor:intro-universal} is given in Appendix~\ref{app:proof-cor-universal}.
Complementing this positive predictability result, we prove that $\lambda = 1$ is the sharp threshold for
sequential prediction,
%over worst-case instances. 
by proving that when $0< \lambda <1$, simply satisfying the additive Approx-MDL criterion is not sufficient for universal induction because the feasible approximation 
%to the objective 
permits pathological pointwise overfitting to the prefixes data.

\begin{theorem}[Additive Approx-MDL fails for universal induction when \(\lambda\in (0,1)\)]
\label{thm:intro-negative-additive}
Consider the universal prediction setting where \(\mathcal M\) is the class of
all estimable probability measures and \(w_\nu=2^{-K(\nu)}\). For every
\(\lambda\in(0,1)\), there exists an estimable true environment \(\mu\in\mathcal M\), a finite constant \(C<\infty\), and a valid additive-\(C\) Approx-MDL predictor \(\rho\) such that
\[
\sum_{t\ge 1}\mathbb E_\mu\!\left[
\sum_{a\in\mathcal X}
\Bigl(\mu(a\mid X_{<t})-\rho_t(a\mid X_{<t})\Bigr)^2
\right]
=+\infty.
\]
\end{theorem}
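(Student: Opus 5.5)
The plan is to take the simplest possible true environment, the fair coin, and to let the predictor at each step select a model that has memorised the observed prefix. The point is that when \(\lambda<1\) memorising costs too little, so such a model stays within an additive constant of the optimal score. Concretely, let \(\mathcal X=\{0,1\}\) and let \(\mu\) be the uniform Bernoulli\((1/2)\) measure, so \(\mu\) is estimable with \(K(\mu)=O(1)\). For each finite string \(x\in\mathcal X^{*}\), define \(\nu_x\) to be the measure putting all its mass on the single infinite sequence \(x0^\infty\). Then \(\nu_x\) is computable uniformly from \(x\), hence estimable, and
\[
K(\nu_x)\le K(x)+c_1
\]
for a constant \(c_1\) independent of \(x\). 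The predictor \(\rho\) sets \(\tilde\nu_t:=\nu_{x_{<t}}\). This gives \(\nu_{x_{<t}}(x_{<t})=1\), so its score is
\[
L_{\lambda,w}(\nu_{x_{<t}};x_{<t})\le \lambda K(x_{<t})+\lambda c_1 .
\]

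The key step is a matching lower bound on the minimal score. For every estimable \(\nu\) and every \(x\), I will use
\[
K(x)\le K(\nu)+\log_2\nu(x)^{-1}+c_2
\]
with \(c_2\) uniform in \(\nu\) and \(x\). This is the standard coding-theorem/dominance fact: the universal semimeasure \(\mathbf m\) satisfies \(\mathbf m(x)\ge 2^{-K(\nu)-O(1)}\nu(x)\), and \(K(x)=-\log_2\mathbf m(x)+O(1)\). Estimability of \(\nu\) provides computable approximations within a constant factor, which is enough for \(\nu\), described by a shortest program, to yield a lower-semicomputable semimeasure dominated by \(\mathbf m\) up to the factor \(2^{-K(\nu)-O(1)}\). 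Since \(\lambda<1\) and \(\log_2\nu(x)^{-1}\ge0\), we get
\[
\lambda K(\nu)+\log_2\nu(x)^{-1}\ge \lambda\bigl(K(\nu)+\log_2\nu(x)^{-1}\bigr)\ge \lambda K(x)-\lambda c_2 .
\]
Hence \(\min_\nu L_{\lambda,w}(\nu;x_{<t})\ge\lambda K(x_{<t})-\lambda c_2\). Therefore \(\nu_{x_{<t}}\) is within additive slack \(C:=\lambda(c_1+c_2)\) of the minimum for every prefix, so \(\rho\) is a valid additive-\(C\) Approx-MDL predictor. (If \(\lambda K(\nu)\) is the only term, note that \(\lambda<1\) is exactly what allows the inequality \(\lambda K+L\ge\lambda(K+L)\) to be used; for \(\lambda\ge1\) this fails, consistent with Theorem~\ref{thm:intro-positive}.)

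The error computation is then immediate. At every \(t\), the prediction is \(\rho_t(0\mid x_{<t})=1\), while \(\mu(0\mid x_{<t})=1/2\). The per-step squared error is therefore
\[
(1/2)^2+(1/2)^2=1/2
\]
deterministically, and the cumulative expected error is \(\sum_t 1/2=+\infty\).

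The main obstacle is making the uniform inequality \(K(x)\le K(\nu)+\log_2\nu(x)^{-1}+O(1)\) rigorous for the paper's notion of estimable measures and its definition of \(K(\nu)\) (Definition~\ref{def:measure-complexity}). The constant must not depend on \(\nu\), and it must tolerate the approximation error in the estimator. I would first try to build, from a program for \(\nu\), a lower-semicomputable semimeasure \(\tilde\nu\ge\nu/2\) (for instance from rational approximations accurate within a factor \(2\), summed over the cylinder structure). I would then invoke universality of \(\mathbf m\) with the prefix cost \(K(\nu)\). If the paper's \(K(\nu)\) only supports this up to \(O(\log)\) terms, I would instead use Shannon–Fano coding relative to a program for \(\nu\), which still gives a uniform constant.
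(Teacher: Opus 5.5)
There is a genuine gap, in exactly the step you flagged as the main obstacle. To get $K(\nu_x)\le K(x)+c_1$ you need $K$ to be prefix complexity. For prefix complexity, however, the inequality $K(x)\le K(\nu)+\log_2\nu(x)^{-1}+c_2$ is false. A measure on $\{0,1\}^\infty$ puts mass $1$ on each length, so $\sum_{x\in\{0,1\}^*}\nu(x)=\infty$, and $\nu$ cannot be dominated by the discrete semimeasure $\mathbf m$. For example, with $\nu=\delta_{(01)^\infty}$ and $x=(01)^k$, the right-hand side is $O(1)$ while $K(x)\ge K(k)-O(1)$. The same example shows your selector is not valid. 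The sequence $(01)^k0^\infty$ determines $k$, so $L_\lambda(\nu_x;x)=\lambda K(\delta_{x0^\infty})\ge\lambda K(k)-O(1)$. Meanwhile $\operatorname{OPT}_\lambda(x)\le\lambda K(\delta_{(01)^\infty})=O(1)$, and these histories have positive fair-coin probability. So $\rho$ is not additive-$C$ for any $C$.

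The uniform inequality that does hold is the monotone one, $K_m(x)\le K(\nu)+\log_2\nu(x)^{-1}+c$ (Lemma~\ref{lem:coding-relative-to-measure}). But then the other half breaks: $K(\delta_\omega)$ is a self-delimiting description length and is not bounded by $K_m(x)+O(1)$. Your Shannon--Fano fallback cannot rescue the plan, and neither can any other choice of deterministic extension. Take $\nu$ to be the fair coin. A uniform bound $K(\delta_\omega)\le K(\nu)+\log_2\nu(x)^{-1}+O(1)$ would give, for every $n$, $2^n$ distinct computable $\omega$ with $K(\delta_\omega)\le n+c$. Let $a_k$ be $2^{-k}$ times the number of $\omega$ with $K(\delta_\omega)=k$. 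Then you would need $\sum_{k\le n+c}a_k2^{k-n}\ge1$ for every $n$. This contradicts $a_k\to0$, which follows from Kraft's inequality $\sum_k a_k\le1$ (split the sum at a large fixed $N$). So no argument can bound the gap by $\lambda\cdot O(1)$ after replacing $\lambda K(\nu)+B$ by $\lambda(K(\nu)+B)$.

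The missing idea is that the surplus $(1-\lambda)B$ you discard in that replacement is exactly what pays for the unavoidable logarithmic overhead. The paper takes an exact minimizer $\nu^\star$ with $B=\log_2\nu^\star(x)^{-1}$. It then lets $\nu_x$ be a deterministic refinement of $\nu^\star$ (Lemma~\ref{lem:deterministic-refinement}). One hard-wires a dyadic interval inside the arithmetic-coding interval $I_x$ of $\nu^\star$ and decodes an infinite sequence from it, so $|x|$ is never encoded. This gives $K(\nu_x)\le K(\nu^\star)+B+c_{\rm det}\log_2(B+2)+c_{\rm det}$. Hence $L_\lambda(\nu_x;x)-\operatorname{OPT}_\lambda(x)\le\lambda c_{\rm det}\log_2(B+2)+\lambda c_{\rm det}-(1-\lambda)B$, which is bounded uniformly in $B\ge0$ precisely because $\lambda<1$. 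Your other ingredients match the paper: the fair coin as $\mu$, and the constant per-step loss $1/2$ from a deterministic prediction. They go through unchanged once the overfitting model is built this way.
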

While additive approximations are reliable above this threshold, the next result demonstrates a strict separation by showing that multiplicative approximation does not provide a uniform guarantee over all valid
selectors, even in regimes where additive approximation is safe. 

\begin{theorem}[Multiplicative Approx-MDL fails for universal induction]
\label{thm:intro-negative-multiplicative}
Consider the universal prediction setting where $\mathcal M$ is the class of all estimable probability measures and $w_\nu = 2^{-K(\nu)}$. For every $\lambda> 0$ and every $\varepsilon>0$, there exists an estimable true environment $\mu\in\mathcal M$ and a valid multiplicative-$(1+\varepsilon)$ Approx-MDL predictor $\rho$ such that
\[
\sum_{t\ge 1}\mathbb E_\mu\!\left[
\sum_{a\in\mathcal X}
\Bigl(\mu(a\mid X_{<t})-\rho_t(a\mid X_{<t})\Bigr)^2
\right]
=+\infty.
\]
\end{theorem}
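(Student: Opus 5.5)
We construct an explicit environment in which a multiplicatively valid selector keeps choosing a badly wrong model infinitely often. The key observation is that a multiplicative slack grows with the score itself. Along a typical sequence of an environment with positive entropy rate, the minimal score at time \(t\) is about \(\lambda K(\mu)+\log_2\mu(x_{<t})^{-1}\approx \lambda K(\mu)+h t\) with \(h>0\). A multiplicative-\((1+\varepsilon)\) approximation therefore allows an additive slack of order \(\varepsilon h t\), which grows without bound. This slack is large enough to admit models of complexity about \(\varepsilon h t/\lambda\) that predict badly at the current step, and this works for every \(\lambda>0\).

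\textbf{Environment.} Take \(\mu\) to be the Bernoulli\((1/2)\) measure on \(\{0,1\}^\infty\), which is computable, so \(K(\mu)=O(1)\). Along every sequence, \(\mu(x_{<t})=2^{-(t-1)}\), so the optimal score is at most \(\lambda K(\mu)+(t-1)\).

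\textbf{Adversarial models.} For each \(t\) and each prefix \(x_{<t}\), define \(\nu_{x_{<t}}\) as follows. It gives probability \(1/2\) to each bit up to time \(t-1\); at time \(t\) it predicts \(0\) with probability \(1\); afterwards it returns to fair coins. This measure is computable from \(x_{<t}\), so \(K(\nu_{x_{<t}})\le K(x_{<t})+O(1)\le t+2\log_2 t+O(1)\). Its score is
\[
L_{\lambda,w}(\nu_{x_{<t}};x_{<t})=\lambda K(\nu_{x_{<t}})+(t-1).
\]
This cheap bound only gives validity when \(\lambda t\le \varepsilon t\), that is, when \(\lambda\le\varepsilon\). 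For general \(\lambda\) I would drop the dependence on the prefix. Let \(\nu_t\) be the measure that is fair everywhere except at time \(t\), where it puts mass \(1\) on \(0\). Then \(K(\nu_t)\le 2\log_2 t+O(1)\), and on every prefix \(\nu_t(x_{<t})=2^{-(t-1)}=\mu(x_{<t})\). Hence
\[
L_{\lambda,w}(\nu_t;x_{<t})\le \lambda(2\log_2 t+O(1))+(t-1).
\]
The optimal score is at least \(t-1\), since every measure satisfies \(\log_2\nu(x_{<t})^{-1}\ge 0\). That lower bound is too weak on its own, so I also need a matching lower bound on typical paths: with high probability no model beats \(\mu\) by much. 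Here I would use the randomness deficiency bound \(\mathbb P_\mu\bigl[\exists\nu:\ \lambda K(\nu)+\log_2\nu(x_{<t})^{-1}<\log_2\mu(x_{<t})^{-1}-k\bigr]\le 2^{-k}\), which holds for \(\lambda\ge1\) by Kraft/Markov. For \(\lambda<1\) I would not need this step, because the inequality \(\min\ge t-1\) already suffices. For any \(\lambda>0\), once \(t\) is large enough that \(\lambda(2\log_2 t+O(1))\le\varepsilon(t-1)\), the model \(\nu_t\) satisfies
\[
L_{\lambda,w}(\nu_t;x_{<t})\le(1+\varepsilon)(t-1)\le(1+\varepsilon)\min_\nu L_{\lambda,w}(\nu;x_{<t}),
\]
so \(\nu_t\) is a valid multiplicative choice.

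\textbf{Selector and error.} Let \(\rho\) select \(\tilde\nu_t=\nu_t\) for all \(t\ge t_0\) and an exact minimizer for \(t<t_0\). This is a valid multiplicative-\((1+\varepsilon)\) Approx-MDL predictor. At each step \(t\ge t_0\), the squared error is
\[
(1/2-1)^2+(1/2-0)^2=1/2
\]
deterministically, so the cumulative expected error diverges.

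The main thing to verify is that validity holds simultaneously on every prefix, so that \(\rho\) is well defined. The pointwise inequality \(\min\ge t-1\) makes this automatic, with no probabilistic argument needed. The remaining care is bookkeeping: the \(O(1)\) in \(K(\nu_t)\) must be uniform in \(t\), which holds because \(t\mapsto\nu_t\) is a single computable family; and \(\nu_t\) must be an estimable measure, which it is because it is computable.
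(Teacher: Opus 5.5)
There is a genuine gap, and it comes from your claim that $\operatorname{OPT}_\lambda(x_{<t})\ge t-1$ holds pointwise. The fact $\log_2\nu(x_{<t})^{-1}\ge0$ only gives $\operatorname{OPT}_\lambda(x_{<t})\ge0$, and the stronger bound is false. On $x_{<t}=0^{t-1}$, the all-zeros Dirac measure has score $\lambda\cdot O(1)$. So $\nu_t$, whose score is at least $t-1$, is not admissible there, and a selector that picks $\nu_t$ on every prefix is not a valid multiplicative Approx-MDL selector.

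For $\lambda\ge1$ this is repairable with the tool you already named, which is the paper's Lemma~\ref{lem:weighted-mdl-lower-tail}. With probability at least $1-2^{-k}$ one has $\operatorname{OPT}_\lambda(X_{<t})\ge t-1-k$, so $\nu_t$ is admissible once $\lambda K(\nu_t)+(1+\varepsilon)k\le\varepsilon(t-1)$. Select $\nu_t$ only when it is admissible and an exact minimizer otherwise; the expected loss is then at least $\tfrac12(1-2^{-k})$ for all large $t$. This is a legitimate variant of the paper's Case~2, which uses a slightly biased rational Bernoulli model $\nu_\theta$ instead of your spike model. Your model's likelihood equals $\mu$'s exactly on every prefix, so no law of large numbers is needed, and the per-step loss is $1/2$ rather than $2(\theta-\tfrac12)^2$.

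For $\lambda<1$ the construction cannot be repaired, and your remark that this case needs no probabilistic step is the opposite of what happens. Every prefix $x$ of length $n$ is fit exactly by the Dirac measure $\delta_{x0^\infty}$, whose complexity is at most $n+2\log_2 n+O(1)$. Hence $\operatorname{OPT}_\lambda(x)\le\lambda\bigl(n+2\log_2 n+O(1)\bigr)$ on \emph{every} path, whereas $L_{\lambda,w}(\nu_t;x_{<t})\ge t-1$. So whenever $\lambda(1+\varepsilon)<1$, the model $\nu_t$ is inadmissible on all prefixes for all large $t$, and your selector never gets to use it. In this regime any admissible model must essentially memorize the prefix rather than pay about one bit per fair-coin symbol.

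The missing idea is to make the memorizing deterministic models themselves the bad choice. By Lemma~\ref{lem:pointwise-overfitting}, there is a deterministic $\nu_x$ with $L_\lambda(\nu_x;x)\le\operatorname{OPT}_\lambda(x)+C_\lambda$. Moreover $\operatorname{OPT}_\lambda(x)\ge\lambda K_m(x)-c_\lambda$ and $K_m(X_{1:n})\to\infty$ in $\mu$-probability. So the fixed additive slack $C_\lambda$ is eventually absorbed by $\varepsilon\operatorname{OPT}_\lambda(X_{1:n})$ with probability tending to one. On that event, a deterministic prediction against the fair coin costs $1/2$ per step, which gives the divergence.
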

The key distinction is that additive and multiplicative approximations create very different effective slack. An additive slack \(C\) remains fixed as the observed prefix grows, and can be absorbed by strengthening the complexity penalty to \(\lambda\geq1\). In contrast, a multiplicative slack permits an additive error of order \(\varepsilon\,\operatorname{OPT}_\lambda(x_{1:n})\), which typically grows linearly with \(n\) in stochastic environments. This growing tolerance can keep systematically wrong models admissible indefinitely.

%Specifically, 
%Table~\ref{tab:approx-mdl-summary} summarizes 
Below, we summarize the prediction guaranties for universal induction (i.e., when $\mathcal M$ is the class of all estimable probability measures and $w_\nu = 2^{-K(\nu)}$): while additive approximation achieves finite error when the %regularization parameter 
$\lambda \geq 1$, multiplicative approximation yields infinite error for any $\lambda > 0$.  
\begin{table}[htbp]
    \centering
    \caption{Summary of prediction guarantees provided by the MDL criterion in the universal estimable class. A positive guarantee means that every selector satisfying the criterion has a finite cumulative expected squared error. A counterexample means that the criterion admits at least one valid selector with infinite cumulative error.}
    \label{tab:approx-mdl-summary}
    \begin{tabular}{@{}lll@{}}
        \toprule
        \textbf{Setting / Approximation} & \textbf{Regularization strength ($\lambda$)} & \textbf{Cumulative Error} \\
        \midrule
        Exact \citep{poland2005asymptotics} & $\lambda = 1$ (Standard) & Finite  \\
        Exact \citep{milovanov2024prediction} & $\lambda > 2$ & Finite  \\
        \midrule
        Additive Approximation (Theorem~\ref{thm:intro-positive}) & $\lambda \geq 1$ & Finite  \\
        Additive Approximation (Theorem~\ref{thm:intro-negative-additive}) & $0<\lambda < 1$ & $+\infty$ \\
        Multiplicative Approximation (Theorem~\ref{thm:intro-negative-multiplicative}) & $\lambda > 0$ & $+\infty$ \\
        \bottomrule
    \end{tabular}
\end{table}

\paragraph{Related work.}
Our work is most directly related to the literature on exact MDL prediction. \citet{poland2005asymptotics} analyze discrete MDL predictors for online prediction, while \citet{hutter2009discrete} proves for any countable class of models, the distributions selected by MDL asymptotically predict the true measure in the class in total variation distance. Our positive theorem (Theorem \ref{thm:intro-positive}) can be viewed as a robustness extension of this exact-selection line to additive approximate minimization. Among existing results, the closest technical precursor is Milovanov's best-explanation predictor \citep{milovanov2024prediction}: there, exact minimization of a complexity-regularized log-loss objective with a strictly large regularization parameter already yields finite cumulative squared error. Our first theorem (Theorem \ref{thm:intro-positive}) shows that this positive phenomenon persists under additive approximation, while our later lower bounds (Theorem \ref{thm:intro-negative-multiplicative}) show that this robustness is specific to additive approximation and does not extend as a uniform guarantee over all valid selectors with multiplicative approximation. 

A second relevant line is algorithmic statistics and best-explanation modeling. Two-part descriptions have long been used to formalize individual explanations through sufficient statistics and structure functions \citep{vitanyi2000mdl,gacs2001algorithmic,vereshchagin2004structure}. This viewpoint is especially relevant for our negative results: when the regularization parameter is small, an approximate MDL selector can explain the observed prefix by encoding the sample itself into the model description, which corresponds to a form of memorization or pointwise overfitting. %(\Tnote{Teng: It is some form and degree of memorization}). 
Closely related in spirit is Hutter's analysis of monotone-complexity prediction \citep{hutter2003monotone}, which showed that universal deterministic or one-part MDL can succeed in deterministic estimable environments yet fail in probabilistic environments. Our additive counterexample for $0<\lambda < 1$ (Theorem \ref{thm:intro-negative-additive}) exhibits a similar overfitting mechanism inside an approximate two-part MDL.

A third line concerns approximation and surrogate MDL objectives. \citet{adriaans2009approximation} showed that along a sequence of shorter approximate two-part MDL codes, the corresponding models need not improve monotonically in goodness of fit. Conceptually, our results may be viewed as a sequential-prediction analogue of this warning: approximate improvement in description length is not, by itself, enough to guarantee reliable prediction. \citet{grunwald2005prequential} study prequential maximum-likelihood codes and related universal coding criteria under misspecification. Their setting differs from ours: we keep the MDL objective itself fixed and vary only the optimization accuracy, measured either additively or multiplicatively at each prediction step.

Finally, MDL-style reasoning also appears in modern machine learning through
description-length regularization, neural-network compression, and prequential
coding
\citep{hinton1993weights,blier2018description,bornschein2023prequential,deletang2024compression}.
These works provide motivation for studying approximate compression-based
selection, but they do not address the selector-level robustness question
analyzed here.

\section{Preliminaries}\label{sec:prelim}

\paragraph{Sequence prediction and the cumulative expected squared prediction error.}
Let \(\mathcal X\) be a finite alphabet. For \(n\ge0\), let \(\mathcal X^n\) denote the set of strings of length \(n\) over \(\mathcal X\), with \(\mathcal X^0\) consisting of the empty string. Let $\mathcal X^*:=\bigcup_{n\ge0}\mathcal X^n$ be the set of all finite strings over \(\mathcal X\). For
\(x\in\mathcal X^*\), let \(\ell(x)\) denote its length, i.e. the unique
integer \(n\ge0\) such that \(x\in\mathcal X^n\). If \(\ell(x)=n\ge1\), we write
\[
x=x_1\dots x_n,
\qquad
x_{1:n}=x,
\qquad
x_{<t}:=x_1\dots x_{t-1}
\quad (1\le t\le n+1).
\]
All logarithms are base \(2\).

A probability measure \(\nu\) on \(\mathcal X^\infty\) is understood as a
probability measure on the product \(\sigma\)-algebra generated by cylinder
sets. For \(x\in\mathcal X^*\), define the cylinder
\[
[x]:=\{\omega\in\mathcal X^\infty:\omega_{1:\ell(x)}=x\},
\]
and abbreviate \(\nu([x])\) by \(\nu(x)\). For the empty string, the corresponding
cylinder is \(\mathcal X^\infty\), and hence its prefix probability is \(1\).
The induced prefix probabilities satisfy the consistency condition
\[
\nu(x)=\sum_{a\in\mathcal X}\nu(xa)
\quad\text{for all }x\in\mathcal X^*.
\]
Whenever \(\nu(x)>0\), define the conditional probability
\[
\nu(a\mid x):=\frac{\nu(xa)}{\nu(x)}.
\]

The sequence prediction problem is the following. An unknown underlying
mechanism \(\mu\) generates an infinite sequence
\[
X_{1:\infty}=X_1X_2\cdots \sim \mu.
\]
Having observed the past data
\[
x_{<n}:=x_1x_2\cdots x_{n-1},
\]
the task is to predict the next symbol \(x_n\). More precisely, the goal is to
estimate the conditional distribution
\[
\mu(\cdot\mid x_{<n}),
\]
or equivalently, the probabilities
\[
\mu(a\mid x_{<n})
:=
\frac{\mu(x_{<n}a)}{\mu(x_{<n})},
\qquad a\in\mathcal X,
\]
whenever \(\mu(x_{<n})>0\).

In this paper, the general positive result only assumes that the true measure
belongs to the countable model class under consideration. In universal induction setting, we consider $\mu$ is an estimable measure, defined
below in Definition~\ref{def:estimable-measure}.

A sequential predictor \(\rho=(\rho_t)_{t\ge1}\) assigns, at each time \(t\) and
after seeing the history \(x_{<t}\), a probability distribution
\[
\rho_t(\cdot\mid x_{<t})
\]
over \(\mathcal X\). Its \emph{predictivity} performance under the true measure \(\mu\) is measured
by the cumulative expected squared prediction error
\[
S_\infty(\mu,\rho)
:=
\sum_{t\ge1}\mathbb E_\mu\!\left[
\sum_{a\in\mathcal X}
\Bigl(\mu(a\mid X_{<t})-\rho_t(a\mid X_{<t})\Bigr)^2
\right],
\]
where all expectations and probabilities are taken with respect to \(\mu\). Let
\[
e_t(\mu,\rho)
:=
\mathbb E_\mu\!\left[
\sum_{a\in\mathcal X}
\Bigl(\mu(a\mid X_{<t})-\rho_t(a\mid X_{<t})\Bigr)^2
\right].
\]
If \(S_\infty(\mu,\rho)<\infty\), then \(e_t(\mu,\rho)\to0\). We say that
\(\rho\) learns \(\mu\) in expected squared one-step error if
\(e_t(\mu,\rho)\to0\), and that \(\rho\) learns a class \(\mathcal C\) if it
learns every \(\mu\in\mathcal C\).

\paragraph{Universal monotone Turing machine and Kolmogorov complexity.}

\begin{definition}[Monotone Turing machine]
A monotone Turing machine is a Turing machine with a one-way read-only input
tape, a one-way write-only output tape, and several two-way work tapes. The input
and output heads never move to the left, and symbols once written on the output
tape are never erased. Thus extending the input can only extend the output.
\end{definition}

To ensure that our complexity measures are objective and machine-independent up
to an additive constant, we fix a universal monotone Turing machine for the
remainder of this paper.

\begin{definition}[Universal monotone Turing machine]
A program for \(U\) is written as
\[
\langle T\rangle p,
\]
where \(T\) is a monotone Turing machine, \(p\) is the binary input to \(T\), and
\(\langle T\rangle\) is a prefix-free encoding of \(T\). On input
\(\langle T\rangle p\), the machine \(U\) simulates \(T\) on input \(p\):
\[
U(\langle T\rangle p) = T(p).
\]
Equivalently, for every monotone Turing machine \(T\), there exists a prefix-free
code \(\langle T\rangle\) such that \(U\) simulates \(T\) with an additive
overhead of \(\ell(\langle T\rangle)\) in program length.
\end{definition}

With the universal machine \(U\) established, we can now formally define the
algorithmic complexity of finite strings.

\begin{definition}[Monotone Kolmogorov complexity]
Fix a universal monotone Turing machine \(U\). For a finite string \(x\), the
monotone Kolmogorov complexity of \(x\) with respect to \(U\) is
\[
K_m(x):=\min\{|p|: U(p)=x*\},
\]
where \(U(p)=x*\) means that the output of \(U(p)\) has \(x\) as a prefix.
\end{definition}

Beyond individual data strings, our analysis involves learning and predicting from underlying distributions. Therefore, we must extend the concept of complexity to quantify the data-generating mechanisms themselves. For the universal induction results, we work over the binary alphabet; any finite alphabet can be encoded into binary strings with only constant-overhead changes in the relevant complexities.

\begin{definition}[Estimable measure]\label{def:estimable-measure}
A probability measure \(\mu\) on \(\{0,1\}^\infty\) is called
\emph{estimable} if there exists a Turing machine which, given
\(x\in\{0,1\}^*\) and a precision parameter \(r\in\mathbb N\), outputs a rational
number \(q\) such that
\[
|q-\mu(x)|\le 2^{-r}.
\]
Equivalently, the cylinder probabilities \(\mu(x)\), \(x\in\{0,1\}^*\), are
uniformly computable to arbitrary precision.
\end{definition}

The next lemma gives the equivalent generative form of estimability.

\begin{lemma}[{\cite[Theorem 4.5.2]{LV97}}]\label{lem:li}
A measure \(\mu\) is estimable if and only if there exists a monotone Turing
machine \(T\) which takes an infinite sequence of i.i.d. fair coin flips as input and outputs a binary sequence distributed according to \(\mu\). 
\end{lemma}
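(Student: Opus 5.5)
This is the classical equivalence of \cite[Theorem 4.5.2]{LV97}, and I would prove the two directions separately.

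\textbf{Sampler $\Rightarrow$ estimable.} Suppose $T$ is a monotone machine that maps fair coin flips to a $\mu$-distributed output. Then for every $x$, $\mu(x)$ is the uniform measure of $\{\omega: T(\omega)\text{ has prefix } x\}$, which is an effectively open set. Enumerate the finite inputs $p$ on which $T$ outputs an extension of $x$, and keep the minimal ones, so the cylinders $[p]$ are disjoint. The sum of $2^{-|p|}$ over this enumeration gives a lower semicomputable approximation $\underline\mu(x)\uparrow\mu(x)$. The same is true for each sibling string $y\neq x$ of length $\ell(x)$.

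The upper approximation is the key point. Because $T$ produces an infinite output almost surely, $\sum_{|y|=n}\mu(y)=1$. Hence
\[
\mu(x)=1-\sum_{|y|=n,\,y\ne x}\mu(y),
\]
which is upper semicomputable. Run both approximations together until their gap falls below $2^{-r}$; this yields $q$ with $|q-\mu(x)|\le 2^{-r}$. One caveat must be stated: the argument uses that $T$ outputs infinitely many bits with probability $1$. This holds because $T$'s output is distributed as the probability measure $\mu$ on $\{0,1\}^\infty$.

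\textbf{Estimable $\Rightarrow$ sampler.} I would use the standard inverse-CDF (arithmetic decoding) construction. Read the coins $\omega$ as a real number $u\in[0,1]$. Assign each string $x$ the interval $I_x=[F(x),F(x)+\mu(x))$, where $F(x)$ is the total $\mu$-mass of the strings of length $\ell(x)$ that lie lexicographically before $x$. These intervals are nested, and siblings partition the interval of their parent.

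The machine $T$ reads coin bits $b_1\dots b_k$, which locate $u$ in a dyadic interval $D$ of length $2^{-k}$. It outputs the next bit $a$ of $x$ once it can certify that $D\subseteq I_{xa}$. To do this, it computes $F(x0)$, the boundary between $I_{x0}$ and $I_{x1}$, to precision finer than the gap between that boundary and $D$. If $u$ is not equal to a boundary point, this certification eventually succeeds. The set of boundary points is countable and so has measure zero. Therefore $T$ outputs an infinite sequence almost surely, and $\Pr[\text{output extends }x]=|I_x|=\mu(x)$. Monotonicity of $T$ is immediate, since bits are only appended.

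The main obstacle is this certification step. Equality of a dyadic endpoint and a computed boundary cannot be decided, so the machine may stall on such inputs. The fix is to argue that stalling happens only on a null set of inputs $u$, and null sets do not affect the output distribution.
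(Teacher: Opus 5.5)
Your proof is correct, and it is essentially the standard argument. The paper does not prove this lemma itself; it imports it from \cite[Theorem 4.5.2]{LV97}. So the comparison is between a citation and your self-contained derivation.

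Both directions go through as you wrote them. In the sampler direction, the substance is that a lower semicomputable \emph{measure} is computable, because $\mu(x)=1-\sum_{\ell(y)=\ell(x),\,y\neq x}\mu(y)$. This is exactly where almost-sure infinite output enters, and you flagged it correctly. The converse direction is arithmetic decoding.

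There is one small indexing slip. Under your definition of $F$, $F(x0)=F(x)$ is the left endpoint of $I_x$. The boundary between $I_{x0}$ and $I_{x1}$ is $F(x1)=F(x)+\mu(x0)$.

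Your certification step is the same mechanism the paper uses in its proof of Lemma~\ref{lem:deterministic-refinement}. There the decoder certifies strict containments $\overline{J'}\subseteq J_y\cap\operatorname{int}(I_{ya})$ from computable approximations of the interval endpoints. The difference is that the paper's decoder is free to choose a subinterval with positive interior, so it never stalls and outputs a single computable sequence. In your sampler the dyadic interval is dictated by the coins, so stalling can genuinely occur. It must be confined to the countable, hence null, set of boundary points, which you do.

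As for what each route buys: the textbook result is typically stated more generally for lower semicomputable semimeasures. In that setting the interval endpoints are only approximable from below, so inverse-CDF decoding is unavailable and one allocates input cylinders incrementally as the lower approximations grow. Your argument is more elementary but relies essentially on two-sided computability of $\mu$, which is exactly the estimable-measure setting the paper needs.
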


\begin{definition}[Kolmogorov complexity of an estimable measure]
\label{def:measure-complexity}
For an estimable measure \(\mu\), let \(K_U(\mu)\) denote the length of the shortest description \(\langle T\rangle\) of a monotone Turing machine \(T\), on
the reference machine \(U\), that generates \(\mu\) in the sense of Lemma~\ref{lem:li}. When \(U\) is clear from the context, we omit the subscript
and write $K(\mu)$.
\footnote{Equivalently, one may define \(K_U^c(\mu)\) as the length of the shortest program that computes the cylinder probabilities \(\mu(x)\) uniformly
to arbitrary precision. For estimable measures, the generating and computing definitions are equivalent up to an additive constant depending only on the
reference machine: $K_U(\mu)=K_U^c(\mu)\pm O(1)$. Here \(K_U\) is defined via a machine that generates \(\mu\), while \(K_U^c\) is defined via a program that computes \(\mu\). All results in this paper are
unchanged under this \(O(1)\) difference.}
\end{definition}

The following standard coding inequality bounds the monotone complexity of a string using any estimable measure, which will be central to our subsequent
analysis.

\begin{lemma}[{\cite[Theorem~5(i)]{hutter2003monotone}}]
\label{lem:coding-relative-to-measure}
There exists a universal constant \(c\) such that for every estimable measure
\(\nu\) and every \(x \in \{0,1\}^*\),
\begin{equation}\label{eq:coding-relative-to-measure}
    K_m(x) \le K(\nu) - \log_2 \nu(x) + c.
\end{equation}
\end{lemma}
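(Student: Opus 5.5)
This statement is cited from \cite[Theorem~5(i)]{hutter2003monotone}, so I would only recover the standard argument. The plan is to build, from a generator of \(\nu\), a monotone machine that decodes an arithmetic (Shannon--Fano--Elias) code for \(\nu\). Then I would prefix that machine's description to the codeword. Concretely, let \(T\) be the monotone machine of Lemma~\ref{lem:li} with \(\ell(\langle T\rangle)=K(\nu)\), which maps fair coin flips to a sequence distributed as \(\nu\). I would use the equivalent characterisation: for each \(x\), the set
\[
D_x:=\{p\in\{0,1\}^\infty : T(p)\text{ extends }x\}
\]
is an open set of uniform measure exactly \(\nu(x)\). It is a countable union of cylinders \([q]\) with \(T(q)\supseteq x\) (finite \(q\)), by monotonicity.

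\textbf{Step 1: find a short finite input.} I need a finite string \(q\) with \(T(q)=x*\) and \(\ell(q)\le -\log_2\nu(x)+O(1)\). Write \(D_x\) as a disjoint union of minimal cylinders \([q_i]\), so that \(\sum_i 2^{-\ell(q_i)}=\nu(x)\). Some cylinder has length \(\le -\log\nu(x)+1\) only if the mass is not spread over many tiny pieces. This is where the main obstacle lies: \(D_x\) may be fragmented into many small intervals. To handle this, I would not use \(T\) directly. Instead I would use \(T\) to approximate \(\nu\) from below and then apply an arithmetic decoder. Let \(M\) be a monotone machine that reads an input \(r\) interpreted as a dyadic interval \(\Gamma_r\subseteq[0,1)\). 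It enumerates, via \(T\), lower approximations of the cumulative function \(F_\nu(y)\), i.e.\ the \(\nu\)-mass of strings lexicographically before \(y\), together with \(\nu(y)\). It outputs the longest \(y\) such that \(\Gamma_r\) is certified to lie inside the \(\nu\)-interval \([F_\nu(y),F_\nu(y)+\nu(y))\).

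\textbf{Step 2: monotonicity and the length bound.} Extending \(r\) shrinks \(\Gamma_r\), so the certified \(y\) only extends. Also, certification only needs lower-semicomputable information: the \(\nu\)-intervals of the \(y'\) are nested and partition \([0,1)\), and the measures of the left and right neighbours are enumerable from below via \(T\). So \(M\) is monotone. For the length, the interval for \(x\) has length \(\nu(x)\), so it contains a dyadic interval \(\Gamma_r\) with \(\ell(r)\le\lceil-\log_2\nu(x)\rceil+1\). Certification eventually succeeds in the limit, because \(\Gamma_r\) sits strictly inside the closure; taking the dyadic interval in the interior costs at most one more bit. Hence \(M(r)=x*\).

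\textbf{Step 3: assemble.} The description of \(M\) is obtained uniformly from \(\langle T\rangle\) by a fixed wrapper of constant length. So \(U(\langle M\rangle r)=x*\), with
\[
\ell(\langle M\rangle r)\le K(\nu)+O(1)-\log_2\nu(x)+2 .
\]
This yields \eqref{eq:coding-relative-to-measure} with a universal constant \(c\). The delicate point I expect to spend the most care on is Step 2. One must ensure that \(M\) never outputs a symbol it must later retract, despite only having lower approximations. This is why \(M\) should output a symbol only when \(\Gamma_r\) is certified inside the current \(\nu\)-interval. Since the \(\nu\)-intervals of the strings of each length partition \([0,1)\), this certification is sound.
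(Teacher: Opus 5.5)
Your arithmetic-decoding argument is correct, and it is the standard proof behind the result the paper imports from Hutter (the paper gives no proof of its own); it is also the same interval technique, certifying strict containment of a dyadic interval, that the paper uses in its proof of Lemma~\ref{lem:deterministic-refinement}. One point to tighten: certifying that $F_\nu(y)$ lies strictly below the left endpoint of $\Gamma_r$ needs upper bounds on $F_\nu(y)$, not the lower approximations you mention in Step~1. You do obtain these correctly in Step~2, from lower bounds on the complementary mass together with $\sum_{|y'|=|y|}\nu(y')=1$.
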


\paragraph{Model priors and MDL.}

Let \(\mathcal{M}\) be an arbitrary countable class of probability measures on
\(\mathcal X^{\infty}\). We equip \(\mathcal{M}\) with positive prior weights
\((w_\nu)_{\nu\in\mathcal M}\) satisfying
\[
\sum_{\nu\in\mathcal M} w_\nu \le 1.
\]

In the universal induction setting, we take \(\mathcal M\) to be the class of
all estimable probability measures. For finite alphabets other than binary, this
is understood after fixing a binary coding of the alphabet. The complexity prior
is defined by
\[
w_\nu := 2^{-K(\nu)}.
\]
Because \(K(\nu)\) is induced by prefix-free descriptions, the weights
\(w_\nu=2^{-K(\nu)}\) satisfy Kraft's inequality,
\[
\sum_{\nu\in\mathcal M}w_\nu\le1,
\]
after choosing one shortest description for each distinct measure. Alternatively,
for a general enumeration \((\nu_i)_{i \ge 1}\) of \(\mathcal{M}\), other typical
choices of priors, see for example \citep{grunwald2007mdl}, include geometric
weights \(w_{\nu_i} = 2^{-i}\) and polynomial weights such as
\(w_{\nu_i} = 6/(\pi^2 i^2)\).

Each prior weight \(w_\nu\) induces a description length, or complexity, given by
\[
K_w(\nu):=-\log_2 w_\nu.
\]
Note that the special choice \(w_\nu=2^{-K(\nu)}\) recovers the usual code-length
formulation, yielding \(K_w(\nu)=K(\nu)\). For a finite observation string \(x\),
the standard two-part description length of \(x\) under model \(\nu\) and prior
\(w\) is defined as
\[
L_w(\nu;x) := K_w(\nu) - \log_2\nu(x),
\]
with the convention that \(-\log_2\nu(x)=\infty\) if \(\nu(x)=0\).

To accommodate varying degrees of regularization, for a parameter \(\lambda>0\),
we define the MDL objective
\[
L_{\lambda,w}(\nu;x)
:=
\lambda K_w(\nu) - \log_2\nu(x)
=
-\log_2\bigl(w_\nu^\lambda\nu(x)\bigr).
\]
When \(\lambda=1\), an exact minimizer of \(L_{1,w}(\cdot;x)\) is precisely the
MDL estimator of \cite[eq.~(10) and Definition~3]{poland2005asymptotics},
namely a maximizer of \(w_\nu\nu(x)\).

For each \(x\in\mathcal X^*\), let
\[
\hat\nu_x^\lambda
\in
\argmin_{\nu\in\mathcal M} L_{\lambda,w}(\nu;x)
=
\argmax_{\nu\in\mathcal M} w_\nu^\lambda\nu(x)
\]
denote an arbitrary exact minimizer, with ties broken arbitrarily. Because
\(\mathcal M\) is countable and \(\sum_{\nu}w_\nu\le1\), for \(\lambda>0\), the
maximum is attained. Indeed, let
\[
s=\sup_{\nu\in\mathcal M} w_\nu^\lambda\nu(x).
\]
If \(s=0\), all models assign zero probability to \(x\); in this degenerate case we define \(\hat\nu_x^\lambda\) arbitrarily. This case is irrelevant in the realizable positive theorem, since on \(\mu\)-typical histories \(s>0\), and it is also irrelevant in the universal lower bounds because deterministic extensions give positive mass to every finite prefix. If \(s>0\), then any model with
\(w_\nu^\lambda\nu(x)\ge s/2\) must satisfy
\[
w_\nu\ge (s/2)^{1/\lambda}.
\]
Since \(\sum_\nu w_\nu\le1\), there are only finitely many such models, and the
supremum over this finite set is a maximum. Hence, for every fixed \(x\) and
\(\lambda>0\), the supremum of \(w_\nu^\lambda\nu(x)\) is attained,
guaranteeing that exact minimizers exist.

Finally, the corresponding MDL predictor is given by
\[
\rho_t^\lambda(a\mid x_{<t})
:=
\hat\nu_{x_{<t}}^\lambda(a\mid x_{<t}).
\]
On histories outside the support of the selected model, the conditional
distribution may be defined arbitrarily; under the realizability assumption and
on \(\mu\)-typical histories, this ambiguity does not affect the results.

The following elementary technical lemma bounds the squared loss in terms of an
affinity gap and plays a key role in the proof of our positive results. It
follows from the Pinsker-type inequality for Csisz\'ar \(f\)-divergences, see
for example \citep[Corollary~9]{gilardoni2010pinsker}. For completeness, a full proof is provided in the appendix.

\begin{lemma}[Affinity gap controls squared loss]\label{lem:affinity-to-l2}
For any \(\alpha\in(0,1)\) and any distributions \(p,q\) on \(\mathcal X\),
\[
\sum_{a\in\mathcal X}(p_a-q_a)^2
\le
\frac{2}{\alpha(1-\alpha)}
\left(1-\sum_{a\in\mathcal X} p_a^{1-\alpha}q_a^\alpha\right).
\]
\end{lemma}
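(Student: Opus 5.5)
Writing $b_a=\sqrt{p_a}$ and $c_a=\sqrt{q_a}$ would reduce the claim to Hellinger distance, but a more direct route is to compare, termwise, the $\alpha$-affinity $\sum_a p_a^{1-\alpha}q_a^\alpha$ with the arithmetic mean $\sum_a\bigl((1-\alpha)p_a+\alpha q_a\bigr)=1$. Since both $p$ and $q$ sum to one, the right-hand side becomes
\[
\frac{2}{\alpha(1-\alpha)}\sum_{a\in\mathcal X}\Bigl((1-\alpha)p_a+\alpha q_a-p_a^{1-\alpha}q_a^{\alpha}\Bigr),
\]
a sum of nonnegative weighted AM--GM gaps. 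It therefore suffices to prove the scalar inequality
\[
(1-\alpha)x+\alpha y-x^{1-\alpha}y^{\alpha}\ \ge\ \frac{\alpha(1-\alpha)}{2}\,(x-y)^2\qquad\text{for all }x,y\in[0,1],
\]
and then sum over $a$.

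To prove the scalar inequality, I would set $\phi(s)=\exp\bigl((1-s)\ln x+s\ln y\bigr)$ for $x,y>0$; the boundary cases, where one of them is $0$, follow by continuity. This function is convex in $s$, with $\phi(0)=x$ and $\phi(1)=y$. The gap equals $(1-\alpha)\phi(0)+\alpha\phi(1)-\phi(\alpha)$. The standard interpolation remainder for a convex function gives
\[
\text{gap}\ \ge\ \frac{\alpha(1-\alpha)}{2}\min_{s\in[0,1]}\phi''(s),
\]
with $\phi''(s)=\phi(s)\,(\ln y-\ln x)^2$. This is not yet enough, because $\phi(s)(\ln(y/x))^2$ can be small when $x,y$ are small. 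So I would instead aim for a mean-value form. For $x\le y\le 1$, the mean value theorem gives $(\ln y-\ln x)^2\ge (y-x)^2/y^2$, so
\[
\phi''(s)\ \ge\ \phi(s)\,\frac{(y-x)^2}{\max(x,y)^2}.
\]
The difficulty is that $\phi(s)\ge\min(x,y)$, which does not cancel the $\max(x,y)^2$ in the denominator. The clean way out is to normalize: the gap is $1$-homogeneous in $(x,y)$ while $(x-y)^2$ is $2$-homogeneous. Writing $y=x t$, the gap equals $x\,g(t)$ with $g(t)=1-\alpha+\alpha t-t^\alpha$. It then suffices to show
\[
g(t)\ \ge\ \frac{\alpha(1-\alpha)}{2}\,\frac{(1-t)^2}{\max(1,t)},
\]
which gives a gap of at least $\tfrac{\alpha(1-\alpha)}{2}(x-y)^2/\max(x,y)$. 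Since $\max(x,y)\le1$, this in turn is at least $\tfrac{\alpha(1-\alpha)}{2}(x-y)^2$. The bound on $g$ follows from Taylor's theorem with integral remainder: $g(1)=g'(1)=0$ and $g''(u)=\alpha(1-\alpha)u^{\alpha-2}\ge\alpha(1-\alpha)\max(1,t)^{-1}$ for $u$ between $1$ and $t$. Here I use $u^{\alpha-2}\ge u^{-1}\ge 1/\max(1,t)$ when $u\le\max(1,t)$ and $u\le 1$, and $u^{\alpha-2}\ge u^{-2}\ge t^{-2}$ when $u\ge1$.

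The main obstacle is this last case, $t>1$. There the remainder bound gives only $(t-1)^2/t^2$, not $(t-1)^2/t$. I would handle it by swapping the roles of $x$ and $y$ (equivalently, replacing $\alpha$ by $1-\alpha$, which leaves $\alpha(1-\alpha)$ unchanged), so that WLOG $t\le 1$. Then $u\in[t,1]$, $u^{\alpha-2}\ge1$, and $g(t)\ge\tfrac{\alpha(1-\alpha)}{2}(1-t)^2$, giving a gap of at least $\tfrac{\alpha(1-\alpha)}{2}\,x(1-t)^2=\tfrac{\alpha(1-\alpha)}{2}(x-y)^2/x$. With $x=\max(x,y)\le1$, this yields the scalar inequality. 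Summing over $a\in\mathcal X$ and multiplying by $2/(\alpha(1-\alpha))$ gives the lemma.
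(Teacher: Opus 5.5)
Your proof is correct and follows essentially the paper's route. You reduce to the scalar weighted AM--GM gap inequality, use the $1$- versus $2$-homogeneity together with $\max(x,y)\le1$, use the $x\leftrightarrow y$, $\alpha\leftrightarrow 1-\alpha$ symmetry to fix which coordinate is larger, and finish with a one-variable function vanishing to second order at $1$ whose relevant second derivative is at least $\alpha(1-\alpha)$ on $(0,1]$. The differences are cosmetic: you normalize by the larger coordinate via $t=y/x$ and apply Taylor's integral remainder, whereas the paper sets the larger coordinate to $1$ and shows $g''\ge0$ for the difference function directly. The exploratory $\phi(s)$ detour in your first paragraph is not needed.
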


%\section{Additive Approx-MDL}\label{sec:additive}

\section{Reliable prediction: MDL under additive approximation}\label{sec:additive}

This section studies additive approximation to the MDL objective \(L_{\lambda,w}\). The positive result is stated for an arbitrary countable class \(\mathcal M\) of probability measures on \(\mathcal X^\infty\). For the
lower bound, we specialize to the universal prediction setting in which \(\mathcal M\) is the class of all estimable probability measures and \(w_\nu=2^{-K(\nu)}\).

\begin{definition}[Additive Approx-MDL]\label{def:amdl}
Fix \(\lambda> 0\) and \(C\ge0\). Given a history \(x_{<t}\), an additive
Approx-MDL selector chooses a model \(\nu_t\in\mathcal M\) such that
\begin{equation}\label{eq:additive-approx-def}
L_{\lambda,w}(\nu_t;x_{<t})
\le
\min_{\nu\in\mathcal M}L_{\lambda,w}(\nu;x_{<t})+C.
\end{equation}
Equivalently,
\begin{equation}\label{eq:additive-approx-maxform}
w_{\nu_t}^\lambda\nu_t(x_{<t})
\ge
2^{-C}\max_{\nu\in\mathcal M}\{w_\nu^\lambda\nu(x_{<t})\}.
\end{equation}
The corresponding predictor is
\[
\rho_t(\cdot\mid x_{<t}):=\nu_t(\cdot\mid x_{<t}).
\]
If \(\nu_t(x_{<t})=0\), we define \(\rho_t(\cdot\mid x_{<t})\) arbitrarily; on
\(\mu\)-typical histories in the realizable setting of our positive theorem, the
approximation condition implies \(\nu_t(x_{<t})>0\).
\end{definition}

\subsection{When \texorpdfstring{\(\lambda\ge1\)}{lambda >= 1}: finite cumulative error}
\paragraph{Proof idea of Theorem~\ref{thm:intro-positive}.}
The additive-\(C\) condition turns model selection into likelihood-ratio threshold events. Indeed, whenever \(\nu\) is selected,
\[
\frac{\nu(X_{<t})}{\mu(X_{<t})}
\ge
2^{-C}\left(\frac{w_\mu}{w_\nu}\right)^\lambda .
\]
For each fixed pair \((\mu,\nu)\), Appendix~\ref{app:additive-proofs} gives two ways to control the loss on such threshold events: an affinity-telescoping bound of order \(c^{-\alpha}\) for \(0<\alpha<1\), and a critical likelihood-ratio
stopping bound of order \(c^{-1}\) based on the exact static MDL result of \citet{poland2005asymptotics}. Taking the better of the two pairwise bounds and interpolating over the weight ratio \(w_\nu/w_\mu\) gives
\[
S_\infty(\mu,\rho)
\le 16K_{\rm PH}\lambda\,2^{C/\lambda}w_\mu^{-1},
\]
with the explicit constant stated in Theorem~\ref{thm:intro-positive}. The full proof is in Appendix~\ref{app:additive-proofs}.

\subsection{When \texorpdfstring{\(0< \lambda<1\)}{0 < lambda < 1}: pointwise overfitting and infinite cumulative error}

The positive result above shows that the classical boundary \(\lambda=1\) is still safe under additive approximation. The lower bound begins only below this boundary. Throughout this subsection we work in the universal
prediction setting: \(\mathcal M\) is the class of all estimable probability
measures on \(\{0,1\}^\infty\), and \(w_\nu=2^{-K(\nu)}\). We write
\[
L_\lambda(\nu;x):=\lambda K(\nu)+\log_2\nu(x)^{-1}
\]
and
\[
\operatorname{OPT}_\lambda(x)
:=
\min_{\nu\in\mathcal M}L_\lambda(\nu;x).
\]

\begin{lemma}[Pointwise overfitting under weak regularization]\label{lem:pointwise-overfitting}
Fix \(\lambda\in(0,1)\). There exists a constant \(C_\lambda<\infty\),
depending only on \(\lambda\) and the reference machines, such that for every
finite string \(x\in\{0,1\}^*\), there exists a deterministic estimable measure
\(\nu_x\in\mathcal M\) satisfying \(\nu_x(x)=1\) and
\[
L_\lambda(\nu_x;x)
\le
\operatorname{OPT}_\lambda(x)+C_\lambda.
\]
Moreover, there exists a constant \(c_\lambda<\infty\) such that for all
\(x\in\{0,1\}^*\),
\[
\operatorname{OPT}_\lambda(x)
\ge
\lambda K_m(x)-c_\lambda.
\]
\end{lemma}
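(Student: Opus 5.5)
The plan is to prove the lower bound on \(\operatorname{OPT}_\lambda(x)\) first, and then to build \(\nu_x\) from a shortest monotone program for \(x\) and compare the two bounds.

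\textbf{Step 1: the lower bound.} By Lemma~\ref{lem:coding-relative-to-measure}, every estimable \(\nu\) satisfies
\[
K_m(x)\le K(\nu)-\log_2\nu(x)+c .
\]
Since \(\lambda<1\), we have \(-\log_2\nu(x)\ge -\lambda\log_2\nu(x)\), because \(-\log_2\nu(x)\ge 0\). Hence
\[
L_\lambda(\nu;x)=\lambda K(\nu)-\log_2\nu(x)\ge \lambda\bigl(K(\nu)-\log_2\nu(x)\bigr)\ge \lambda K_m(x)-\lambda c .
\]
Taking the minimum over \(\nu\) gives the second claim with \(c_\lambda:=\lambda c\).

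\textbf{Step 2: constructing \(\nu_x\).} Let \(p\) be a shortest program with \(U(p)=x*\), so \(|p|=K_m(x)\). I would define \(\nu_x\) as the deterministic measure concentrated on the sequence \(x0^\infty\), or on any fixed computable extension of \(x\). This measure is estimable, and \(\nu_x(x)=1\). To bound \(K(\nu_x)\), fix one monotone machine \(T^*\) that reads a prefix-free self-delimiting encoding of a program \(q\) and then a self-delimiting encoding of a length \(n\). It simulates \(U(q)\) until \(n\) output bits have appeared, outputs those \(n\) bits, and then outputs zeros forever, ignoring its coin input. With \(q=p\) and \(n=\ell(x)\), this machine generates \(\nu_x\) in the sense of Lemma~\ref{lem:li}. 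The simulation must halt for this input, since \(U(p)\) does eventually output at least \(\ell(x)\) bits.

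The subtle point is the encoding cost. A naive bound gives
\[
K(\nu_x)\le K_m(x)+O(\log K_m(x))+O(\log \ell(x)),
\]
and this is not within an additive constant. I expect this to be the main obstacle. The intended fix is to exploit the slack \(1-\lambda\). We have
\[
L_\lambda(\nu_x;x)=\lambda K(\nu_x)\le \lambda K_m(x)+\lambda\, O\bigl(\log K_m(x)+\log\ell(x)\bigr),
\]
while \(\operatorname{OPT}_\lambda(x)\ge \lambda K_m(x)-c_\lambda\). The logarithmic term is therefore the only issue, so a sharper upper bound on \(K(\nu_x)\), or a sharper lower bound on \(\operatorname{OPT}_\lambda\), is needed.

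\textbf{Step 3: closing the gap.} I would strengthen the lower bound from Step 1. Writing \(L:=-\log_2\nu^*(x)\) for the optimizer \(\nu^*\), Step 1 in fact gives
\[
\operatorname{OPT}_\lambda(x)\ge \lambda K_m(x)-\lambda c+(1-\lambda)L .
\]
So if the optimizer pays a large log-loss, there is \((1-\lambda)L\) of extra slack. I would combine this with a better upper bound, comparing \(\nu_x\) to the optimizer \(\nu^*\) itself rather than to \(p\). Define \(\nu_x\) as \(\nu^*\) conditioned on \(x\) and then derandomized: starting from a program for \(\nu^*\), take the arithmetic-code index of \(x\) under \(\nu^*\) (about \(L\) bits) and output the decoded \(x\) followed by zeros. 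This gives
\[
K(\nu_x)\le K(\nu^*)+L+O(\log L)+O(1).
\]
Then
\[
\lambda K(\nu_x)\le \lambda K(\nu^*)+\lambda L+O(\log L)=\operatorname{OPT}_\lambda(x)-(1-\lambda)L+O(\log L).
\]
Because \(-(1-\lambda)L+O(\log L)\) is bounded above by a constant depending only on \(\lambda\), this yields the first claim with such a \(C_\lambda\). The log term for encoding the length \(\ell(x)\) is avoided here, since the arithmetic decoder stops once the decoded interval determines \(x\) to full precision.

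The remaining routine checks are that the code length is \(L+O(1)\), using a Shannon–Fano–Elias code with monotone decoding, and that the \(O(\log L)\) term comes only from self-delimiting \(L\) or the index. If \(x\) has length \(0\) or \(\nu^*(x)=1\), then \(L=0\) and everything is constant.
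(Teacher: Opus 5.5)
Your argument is correct and is essentially the paper's proof. Step~1 is exactly the paper's lower bound with $c_\lambda=\lambda c$. Step~3 is the paper's route: derandomize the exact minimizer $\nu^\star$ using an arithmetic code of length $L+O(1)$ (Lemma~\ref{lem:deterministic-refinement}), then absorb the $O(\log(L+2))$ self-delimiting overhead into the $(1-\lambda)L$ slack. Your Step~2 detour and the strengthened lower bound are therefore unnecessary.

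One detail in your decoder should be fixed. Do not have it ``stop'' after decoding $x$ and then pad with zeros. The endpoints of the arithmetic intervals of $\nu^\star$ are only computable reals, so a boundary can coincide exactly with an endpoint of your dyadic code interval. In that case neither ``contained in one child'' nor ``straddles the boundary'' can be certified, the machine can hang, and it leaves a finite output, which does not define a measure. Instead, do as the paper does: start from a dyadic interval whose closure lies in the interior of the arithmetic interval $I_x$, and keep decoding forever. At each step, search for a child $a$ and a rational subinterval whose closure lies strictly inside $J_y\cap\operatorname{int}(I_{ya})$. This condition is semidecidable, the search always succeeds, and the decoder never needs to know $\ell(x)$.
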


\paragraph{Proof idea of Theorem~\ref{thm:intro-negative-additive}.}
Lemma~\ref{lem:pointwise-overfitting} shows that when \(0< \lambda<1\), a
deterministic measure that extends the observed prefix can remain within a
constant additive slack of the optimum. Taking the true environment to be the
fair coin, an Approx-MDL selector can choose such deterministic overfitting
measures at every history. Its next-step prediction is then deterministic at
every history, while the true conditional distribution remains \((1/2,1/2)\),
giving constant squared loss. Full details are in Appendix~\ref{app:additive-proofs}.

\section{MDL under multiplicative approximation is not a universal predictor}\label{sec:multi}

In this section, we transition from additive approximations to multiplicative approximations. We analyze this within the framework of universal sequence prediction, where $\mathcal M$ is the class of all estimable probability measures and the weights are given by the canonical choice $w_\nu=2^{-K(\nu)}$. The objective thus becomes
\[
L_{\lambda}(\nu;x) := \lambda K(\nu) + \log_2\nu(x)^{-1}.
\]

\begin{definition}[Multiplicative Approx-MDL]\label{def:multi-amdl}
Fix $\lambda\geq 0$ and $\varepsilon\ge0$. Given a history $x_{<t}$, a multiplicative
Approx-MDL selector chooses an estimable model $\nu_t\in\mathcal M$ such that
\begin{equation}\label{eq:multiplicative-approx-def}
L_{\lambda}(\nu_t;x_{<t})
\le
(1+\varepsilon)
\min_{\nu\in\mathcal M}L_{\lambda}(\nu;x_{<t}).
\end{equation}
The corresponding predictor is
\[
\rho_t(\cdot\mid x_{<t}) := \nu_t(\cdot\mid x_{<t}).
\]
\end{definition}

\begin{lemma}[Universal MDL rarely beats the truth by much]\label{lem:weighted-mdl-lower-tail}
Let $\lambda\ge1$ and define
\[
\operatorname{OPT}_{\lambda}(x)
:=
\min_{\nu\in\mathcal M}L_{\lambda}(\nu;x).
\]
If data is generated by any estimable true environment $\mu \in \mathcal{M}$, then for every $n\in\mathbb N$ and every $c>0$,
\[
\Pr_{X_{1:n}\sim\mu}\!\left[
\operatorname{OPT}_{\lambda}(X_{1:n})
<
\log_2\mu(X_{1:n})^{-1}-c
\right]
\le 2^{-c}.
\]
Equivalently, with probability at least $1-2^{-c}$,
\[
\operatorname{OPT}_{\lambda}(X_{1:n})
\ge
\log_2\mu(X_{1:n})^{-1}-c.
\]
\end{lemma}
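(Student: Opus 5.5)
The plan is a union bound over models combined with Markov's inequality on likelihood ratios. The event $\operatorname{OPT}_\lambda(X_{1:n})<\log_2\mu(X_{1:n})^{-1}-c$ says that some $\nu\in\mathcal M$ satisfies
\[
\lambda K(\nu)+\log_2\nu(X_{1:n})^{-1}<\log_2\mu(X_{1:n})^{-1}-c,
\]
or equivalently
\[
w_\nu^\lambda\,\nu(X_{1:n})>2^{c}\,\mu(X_{1:n}),
\qquad w_\nu=2^{-K(\nu)}.
\]
Since $\lambda\ge1$ and $w_\nu\le1$, we have $w_\nu^\lambda\le w_\nu$. The event is therefore contained in
\[
E:=\Bigl\{\textstyle\sum_{\nu\in\mathcal M}w_\nu\,\nu(X_{1:n})>2^c\mu(X_{1:n})\Bigr\}.
\]
This is because a single term exceeding the threshold forces the nonnegative sum to exceed it too. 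This reduction is the only place where $\lambda\ge1$ is used.

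Define the mixture $\xi(x):=\sum_{\nu}w_\nu\nu(x)$. Because $\sum_\nu w_\nu\le1$, for each fixed $n$ we get $\sum_{x\in\{0,1\}^n}\xi(x)\le1$. On $E$, we have $\mu(x)<2^{-c}\xi(x)$. Summing over the strings $x$ of length $n$ lying in $E$ gives
\[
\mu(E)=\sum_{x\in E}\mu(x)\le 2^{-c}\sum_{x\in E}\xi(x)\le 2^{-c}.
\]
Strings with $\mu(x)=0$ contribute nothing, so the convention $\log 0^{-1}=\infty$ causes no trouble. This is just Markov's inequality applied to the ratio $\xi/\mu$, whose $\mu$-expectation is at most $1$. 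The complementary "equivalently" statement follows immediately.

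There is no real obstacle here. The step that needs care is the strict versus non-strict inequality bookkeeping. One also has to note that the minimum defining $\operatorname{OPT}_\lambda$ is attained, which the preliminaries already establish; even if it were not, the event would still imply that some $\nu$ beats the threshold, so the union argument goes through either way. Finally, $\mu\in\mathcal M$ is not needed for this bound, but it is consistent with the stated hypotheses.
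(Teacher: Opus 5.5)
Your proof is correct and follows essentially the same route as the paper: dominate the best weighted two-part code by a Bayes mixture of total mass at most $1$ (using $\lambda\ge1$ and Kraft's inequality), then apply Markov's inequality to the ratio of the mixture to $\mu$. The only cosmetic difference is the order of steps. The paper works directly with the $\lambda$-weighted mixture $\xi_\lambda=\sum_\nu 2^{-\lambda K(\nu)}\nu$ and bounds its mass via $2^{-\lambda K(\nu)}\le 2^{-K(\nu)}$, whereas you apply that inequality first and then use the $\lambda=1$ mixture.
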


\paragraph{Proof idea of Theorem~\ref{thm:intro-negative-multiplicative}.}
The proof separates two regimes. When $0<\lambda<1$, the additive overfitting construction becomes multiplicatively admissible because the optimal score diverges in probability along fair-coin sequences, so a constant additive slack is eventually absorbed by the multiplicative tolerance. When $\lambda\ge1$, a slightly biased rational Bernoulli model remains within the multiplicative tolerance with constant probability because the allowed additive slack grows linearly with the optimal score. In each case, the expected one-step squared error is bounded away from zero infinitely many times. The full proof is given in Appendix~\ref{app:multiplicative-proofs}.

\section{Conclusion}
\label{sec:limitations}

We studied whether approximate minimization of the MDL objective preserves the classical compression--prediction connection in realizable sequential prediction. The results show that the answer depends sharply on both the form of approximation and the regularization strength. Fixed additive approximation is prediction-safe at and above the classical boundary \(\lambda=1\): every valid additive-\(C\) selector has finite cumulative expected squared error. This boundary is sharp in a worst-case selector sense, since for every \(0<\lambda<1\) the additive Approx-MDL criterion admits pathological selectors that overfit individual prefixes. In contrast, multiplicative approximation does not provide a uniform prediction guarantee over all valid selectors for any \(\lambda>0\). Thus approximate compression remains predictive only when the optimization error is controlled in the right scale.

%\paragraph{Limitations.}
%Our results are theoretical and focus on realizable sequential prediction. The positive result applies to countable model classes containing the true environment and gives an expected squared one-step error guarantee; it does not address misspecification, uncountable model classes except through countable encodings or discretizations, high-probability finite-time bounds, or other losses. The paper also does not provide an efficient algorithm for finding approximate MDL selectors in large-scale model classes. The lower bounds are existential: they show that the Approx-MDL criterion alone permits bad valid
%selectors, not that every practical approximate optimizer fails in the corresponding regimes. Additional assumptions on tie-breaking, selector stability, computability, or optimization dynamics may change the behavior and are left for future work.

%\paragraph{Broader impacts.}
%This work is foundational and does not release data, models, or deployed systems. Its main potential positive impact is to clarify when compression-based model selection can be trusted as a prediction principle. The main indirect risk is overinterpretation: using compression or approximate MDL scores as evidence of predictive reliability outside the assumptions analyzed here could lead to misleading evaluations, especially under weak regularization or multiplicative approximation.

\bibliographystyle{plainnat}
\bibliography{reference}

\appendix
\section{Full proofs for additive Approx-MDL}
\label{app:additive-proofs}

\begin{proof}[Proof of Lemma~\ref{lem:affinity-to-l2}]
Fix $\alpha\in(0,1)$. We first prove the following scalar inequality: for every $x,y\in[0,1]$,
\begin{equation}
\label{eq:scalar-affinity-l2}
(1-\alpha)x+\alpha y-x^{1-\alpha}y^\alpha
\ge
\frac{\alpha(1-\alpha)}{2}(x-y)^2.
\end{equation}

If $x=y=0$, the claim is immediate. Otherwise set
\[
M:=\max\{x,y\}\in(0,1],
\qquad x=Ma,\qquad y=Mb.
\]
Then $a,b\in[0,1]$ and $\max\{a,b\}=1$. Moreover,
\[
(1-\alpha)x+\alpha y-x^{1-\alpha}y^\alpha
=
M\Bigl((1-\alpha)a+\alpha b-a^{1-\alpha}b^\alpha\Bigr),
\]
and
\[
\frac{\alpha(1-\alpha)}{2}(x-y)^2 =
\frac{\alpha(1-\alpha)}{2}M^2(a-b)^2
\].

Since $M\le 1$, we have $M^2\le M$. Hence, it is enough to prove
\begin{equation}
\label{eq:normalized-affinity-l2}
(1-\alpha)a+\alpha b-a^{1-\alpha}b^\alpha
\ge
\frac{\alpha(1-\alpha)}{2}(a-b)^2
\end{equation}
under the normalization $\max\{a,b\}=1$.

By symmetry, it suffices to consider the case $b=1$. Indeed, the case $a=1$ follows from the case $b=1$ after swapping $a$ and $b$ and replacing
$\alpha$ by $1-\alpha$. Thus let $b=1$ and write $a=s\in[0,1]$.
Define
\[
g(s):=(1-\alpha)s+\alpha-s^{1-\alpha}
-\frac{\alpha(1-\alpha)}{2}(1-s)^2.
\]
We claim that $g(s)\ge0$ for all $s\in[0,1]$. Clearly, $g(1)=0$.

For $s\in(0,1]$, a direct computation gives
\[
g'(s) =(1-\alpha)-(1-\alpha)s^{-\alpha}
+\alpha(1-\alpha)(1-s),
\]
and hence $g'(1)=0$.

Furthermore,
\[
g''(s)
=\alpha(1-\alpha)\bigl(s^{-\alpha-1}-1\bigr)\ge0,
\qquad s\in(0,1],
\]
because $s\le1$. Therefore $g'$ is nondecreasing on $(0,1]$. Since $g'(1)=0$, it follows that $g'(s)\le0$ for all $s\in(0,1]$. Hence $g$ is nonincreasing on $[0,1]$, and so
\[
g(s)\ge g(1)=0,
\qquad s\in[0,1].
\]
This proves \eqref{eq:normalized-affinity-l2}, and therefore also \eqref{eq:scalar-affinity-l2}.

Now apply \eqref{eq:scalar-affinity-l2} with $x=p_a$ and $y=q_a$ for each $a\in\mathcal X$. Summing over $a\in\mathcal X$ yields
\[
\sum_{a\in\mathcal X}
\Bigl((1-\alpha)p_a+\alpha q_a-p_a^{1-\alpha}q_a^\alpha\Bigr)
\ge
\frac{\alpha(1-\alpha)}{2}
\sum_{a\in\mathcal X}(p_a-q_a)^2.
\]
Since $p$ and $q$ are probability distributions,
\[
\sum_{a\in\mathcal X}\bigl((1-\alpha)p_a+\alpha q_a\bigr)
=
(1-\alpha)\sum_{a}p_a+\alpha\sum_{a}q_a
=
1.
\]
As a result, the left-hand side is equal to
\[
1-\sum_{a\in\mathcal X}p_a^{1-\alpha}q_a^\alpha.
\]
Thus,
\[
1-\sum_{a\in\mathcal X}p_a^{1-\alpha}q_a^\alpha
\ge
\frac{\alpha(1-\alpha)}{2}
\sum_{a\in\mathcal X}(p_a-q_a)^2,
\]
as desired.
\end{proof}

\begin{lemma}[Power threshold-loss bound]
\label{lem:power-threshold-loss}
Let $\mathcal X$ be finite or countable. Let $P$ and $Q$ be probability
measures on $\mathcal X^\infty$. For $t\ge1$, write $P_{<t}$ and $Q_{<t}$
for the marginals on $\mathcal X^{t-1}$, and define
\[
L_t(x_{<t})
:=
\frac{Q_{<t}(x_{<t})}{P_{<t}(x_{<t})}
\]
whenever $P_{<t}(x_{<t})>0$, with an arbitrary value, say $0$, on
$P_{<t}$-null histories. Let $X\sim P$. Then, for every
$\alpha\in(0,1)$ and every $c>0$,
\[
\mathbb E_{X\sim P}
\sum_{t\ge1}
\mathbf 1\{L_t\ge c\}
\sum_{a\in\mathcal X}
\bigl(P(a\mid X_{<t})-Q(a\mid X_{<t})\bigr)^2
\le
\frac{2}{\alpha(1-\alpha)}c^{-\alpha}.
\]
\end{lemma}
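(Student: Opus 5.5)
We combine Lemma~\ref{lem:affinity-to-l2} with a supermartingale built from the likelihood ratio raised to the power \(\alpha\).

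\textbf{Step 1: the supermartingale.} Define \(Z_t:=L_t^\alpha\) on \(P\)-typical histories. Let
\[
A_t(x_{<t}):=\sum_{a}P(a\mid x_{<t})^{1-\alpha}Q(a\mid x_{<t})^\alpha\le 1
\]
be the one-step affinity. Then
\[
\mathbb E_P[Z_{t+1}\mid X_{<t}]=Z_t\sum_a P(a\mid X_{<t})\Bigl(\tfrac{Q(a\mid X_{<t})}{P(a\mid X_{<t})}\Bigr)^\alpha=Z_tA_t .
\]
If \(Q(X_{<t})=0\), then \(Z_t=0\) and the identity holds trivially. Consequently
\[
\mathbb E_P[Z_t(1-A_t)]=\mathbb E_P[Z_t]-\mathbb E_P[Z_{t+1}].
\]
Summing over \(t=1,\dots,T\) telescopes to \(\mathbb E Z_1-\mathbb E Z_{T+1}\le Z_1=1\), since \(L_1=1\) on the empty history. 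Letting \(T\to\infty\) by monotone convergence (each summand is nonnegative) gives
\[
\mathbb E_P\sum_{t\ge1}Z_t(1-A_t)\le 1.
\]

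\textbf{Step 2: restrict to the threshold event.} On \(\{L_t\ge c\}\) we have \(Z_t\ge c^\alpha\), so
\[
\mathbf 1\{L_t\ge c\}\le c^{-\alpha}Z_t .
\]
Since \(1-A_t\ge0\), it follows that
\[
\mathbb E\sum_t\mathbf 1\{L_t\ge c\}(1-A_t)\le c^{-\alpha}.
\]

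\textbf{Step 3: convert affinity to squared loss.} Apply Lemma~\ref{lem:affinity-to-l2} with \(p=P(\cdot\mid X_{<t})\) and \(q=Q(\cdot\mid X_{<t})\). This gives
\[
\sum_a(P(a\mid X_{<t})-Q(a\mid X_{<t}))^2\le\frac{2}{\alpha(1-\alpha)}(1-A_t),
\]
which combined with Step 2 yields the claimed bound.

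\textbf{Conventions for a countable alphabet.} Lemma~\ref{lem:affinity-to-l2} is stated for distributions on \(\mathcal X\), but its proof is a pointwise scalar inequality summed over symbols. It therefore extends verbatim to countable \(\mathcal X\): the sums converge absolutely, and \(\sum_a((1-\alpha)p_a+\alpha q_a)=1\) still holds.

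We also need conditional probabilities to be well defined where they are used. Histories with \(P(X_{<t})=0\) have probability zero under \(X\sim P\). When \(Q(X_{<t})=0\), we get \(L_t=0<c\), so the indicator vanishes and \(Q(\cdot\mid X_{<t})\) never matters. In the supermartingale computation, symbols \(a\) with \(P(a\mid x)=0\) contribute nothing to the expectation and also nothing to \(A_t\), because \(p_a^{1-\alpha}=0\). Symbols with \(Q(a\mid x)=0\) contribute zero to both sides.

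\textbf{Main obstacle.} The main point to check is the conditional expectation identity in Step 1, that is, that \(L_t\) factorizes multiplicatively along the path. This holds because \(L_{t+1}=L_t\cdot Q(x_t\mid x_{<t})/P(x_t\mid x_{<t})\) whenever \(P(x_{<t+1})>0\) and \(Q(x_{<t})>0\). Everything else is routine telescoping.
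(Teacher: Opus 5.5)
Your proposal is correct and follows essentially the same route as the paper. The paper also sets $R_t=L_t^\alpha$, proves $\mathbb E_P[R_{t+1}\mid X_{<t}]=R_t\bigl(1-d_t^\alpha\bigr)$, telescopes to $\sum_t\mathbb E_P[R_t d_t^\alpha]\le 1$, bounds $\mathbf 1\{L_t\ge c\}\le c^{-\alpha}R_t$, and finishes with Lemma~\ref{lem:affinity-to-l2}; your remarks on null histories and countable alphabets match the conventions the paper uses.
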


\begin{proof}
Fix $\alpha\in(0,1)$ and set $R_t:=L_t^\alpha$. On histories with $P_{<t}(X_{<t})=0$, define the conditional distributions
arbitrarily; these histories have no effect under $P$.

For a fixed history $X_{<t}$, write
\[
p_a:=P(a\mid X_{<t}),
\qquad
q_a:=Q(a\mid X_{<t}).
\]
Define
\[
d_t^\alpha(P,Q)
:=
1-\sum_{a\in\mathcal X}p_a^{1-\alpha}q_a^\alpha.
\]
We use the convention that $0^{1-\alpha}q^\alpha=0$ for $q\ge0$.

On the event $p_{X_t}>0$, the likelihood ratio updates as
$L_{t+1}=L_t\frac{q_{X_t}}{p_{X_t}}$. Therefore,
\[
R_{t+1}
=R_t\left(\frac{q_{X_t}}{p_{X_t}}\right)^\alpha.
\]
Taking conditional expectation under $P$ given $X_{<t}$ gives
\[
\begin{aligned}
\mathbb E_P[R_{t+1}\mid X_{<t}]
&=
R_t
\sum_{a:p_a>0}
p_a\left(\frac{q_a}{p_a}\right)^\alpha  \\
&=
R_t
\sum_{a:p_a>0}
p_a^{1-\alpha}q_a^\alpha  \\
&=
R_t
\sum_{a\in\mathcal X}
p_a^{1-\alpha}q_a^\alpha  \\
&=
R_t\bigl(1-d_t^\alpha(P,Q)\bigr).
\end{aligned}
\]
Hence
\[
\mathbb E_P[R_t-R_{t+1}\mid X_{<t}]
=
R_t d_t^\alpha(P,Q).
\]
Taking expectations yields
\[
\mathbb E_P[R_t d_t^\alpha(P,Q)]
=
\mathbb E_P[R_t-R_{t+1}].
\]

Now sum this identity from $t=1$ to $T$. Since $R_t\ge0$,
\[
\begin{aligned}
\sum_{t=1}^T
\mathbb E_P[R_t d_t^\alpha(P,Q)]
&=
\sum_{t=1}^T
\mathbb E_P[R_t-R_{t+1}] \\
&=
\mathbb E_P[R_1]-\mathbb E_P[R_{T+1}] \\
&\le
\mathbb E_P[R_1].
\end{aligned}
\]
Because $X_{<1}$ is the empty history, $L_1=1$ and therefore $R_1=1$. Thus,
\[
\sum_{t=1}^T
\mathbb E_P[R_t d_t^\alpha(P,Q)]
\le 1.
\]
Letting $T\to\infty$ and using monotone convergence gives
\[
\sum_{t\ge1}
\mathbb E_P[R_t d_t^\alpha(P,Q)]
\le 1.
\]

Next, on the event $\{L_t\ge c\}$,
\[
\mathbf 1\{L_t\ge c\}
\le
c^{-\alpha}L_t^\alpha
=
c^{-\alpha}R_t.
\]
By Lemma~\ref{lem:affinity-to-l2}, for every history $X_{<t}$,
\[
\sum_{a\in\mathcal X}(p_a-q_a)^2
\le
\frac{2}{\alpha(1-\alpha)}
d_t^\alpha(P,Q).
\]
Therefore,
\[
\begin{aligned}
&\mathbb E_P
\sum_{t\ge1}
\mathbf 1\{L_t\ge c\}
\sum_{a\in\mathcal X}
\bigl(P(a\mid X_{<t})-Q(a\mid X_{<t})\bigr)^2 \\
&\le
\frac{2}{\alpha(1-\alpha)}
\mathbb E_P
\sum_{t\ge1}
\mathbf 1\{L_t\ge c\}
d_t^\alpha(P,Q) \\
&\le
\frac{2}{\alpha(1-\alpha)}c^{-\alpha}
\sum_{t\ge1}
\mathbb E_P[R_t d_t^\alpha(P,Q)] \\
&\le
\frac{2}{\alpha(1-\alpha)}c^{-\alpha}.
\end{aligned}
\]
This proves the lemma.
\end{proof}

\begin{lemma}[Thresholded likelihood-ratio loss]
\label{lem:threshold-lr-loss}
Let \(P\) and \(Q\) be probability measures on \(\mathcal X^\infty\), and define
\[
L_t:=\frac{Q(X_{<t})}{P(X_{<t})},
\]
with $X \sim P$  and \(L_t=0\) on \(P\)-null histories. There exists a universal constant
\(K_{\rm PH}<\infty\) such that, for every \(c>0\),
\[
\mathbb E_{X \sim P}
\sum_{t\ge1}
\mathbf 1\{L_t\ge c\}
\sum_{a\in\mathcal X}
\bigl(P(a\mid X_{<t})-Q(a\mid X_{<t})\bigr)^2
\le
\frac{2K_{\rm PH}}{c}.
\]
Using the static-MDL quadratic-loss bound of
\cite[Corollary 14]{poland2005asymptotics}, one may take \(K_{\rm PH}=21\).
\end{lemma}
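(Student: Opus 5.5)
The plan is to obtain the lemma by specializing the static MDL quadratic-loss bound of \citet{poland2005asymptotics} to a two-element model class. We do not redo a martingale argument. Let $\mathcal M_2:=\{P,Q\}$ (if $P=Q$ the left-hand side is $0$ and there is nothing to prove). Give it weights $w_P,w_Q>0$ with $w_P+w_Q\le1$ and ratio $w_P/w_Q=c$. Break ties in $\argmax_\nu w_\nu\nu(x)$ in favour of $Q$. With this choice the exact static MDL estimator for the prefix $X_{<t}$ selects $Q$ precisely when $w_QQ(X_{<t})\ge w_PP(X_{<t})$. On $P$-non-null histories this is exactly the event $\{L_t\ge c\}$; otherwise it selects $P$. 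Steps where $P$ is selected contribute zero squared loss. Hence the cumulative expected squared error of this static MDL predictor under the true environment $P\in\mathcal M_2$ equals exactly the left-hand side of the lemma, and the whole task is to bound that quantity by the Poland--Hutter bound.

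The first key step is to record the form of \cite[Corollary 14]{poland2005asymptotics} that we use. Its crude form, $\sum_t\mathbb E\,\|\cdot\|^2\le K_{\rm PH}\,w_\mu^{-1}$, already settles the regime $c\le1$. Take $w_P=c/(1+c)$ and $w_Q=1/(1+c)$, so the weights sum to $1$. The bound then gives
\[
K_{\rm PH}\frac{1+c}{c}\le\frac{2K_{\rm PH}}{c}\qquad(c\le1).
\]
The regime $c>1$ is the main obstacle. There the crude bound is at least $K_{\rm PH}$ and cannot decay like $1/c$. Adding dummy models does not help either, because $w_P^{-1}\ge1$ always. So for $c>1$ I would go inside the Poland--Hutter argument. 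Their proof bounds the loss by charging each competitor $\nu\ne\mu$ separately. The charge for $\nu$ arises from the event $w_\nu\nu(x)\ge w_\mu\mu(x)$, and its total expected loss is at most a constant times $w_\nu/w_\mu$. The crude $w_\mu^{-1}$ form is then obtained by summing these charges over $\nu$ using $\sum_\nu w_\nu\le 1$.

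The plan is therefore to state and use this refined per-competitor form: the static MDL loss is at most $K_{\rm PH}\sum_{\nu\ne\mu}w_\nu/w_\mu$, up to the factor $2$ that we allow ourselves. With the single competitor $Q$ and $w_Q/w_P=1/c$, this yields $2K_{\rm PH}/c$ for all $c>0$ simultaneously, and in particular for $c>1$. If the cited corollary is only available in the crude form, I would re-derive the pairwise statement directly. The two-model class is all that is needed, and the normalization can be checked so that the constant $21$ survives with the factor $2$.

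Finally, measurability and null-history conventions need checking. On $P$-null histories we set $L_t=0$, so those histories are excluded from $\{L_t\ge c\}$, and they carry no $P$-mass anyway. The tie convention must match ``$\ge c$'', which is why ties are broken toward $Q$. The lemma then follows for every $c>0$.
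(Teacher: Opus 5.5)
Your reduction is correct and neat. With weights in ratio $w_P/w_Q=c$ and ties broken toward $Q$, the left-hand side is exactly the cumulative loss of exact static MDL on $\{P,Q\}$ run from $t=1$. For $c\le1$ the crude bound $K_{\rm PH}w_P^{-1}=K_{\rm PH}(1+c)/c\le 2K_{\rm PH}/c$ closes that case.

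The gap is the case $c>1$, which is the real content of the lemma. It is the large-$c$ regime (competitors with $w_\nu\ll w_\mu$) in which the proof of Theorem~\ref{thm:intro-positive} needs the $1/c$ rate, and there you give no proof. The result you actually cite, Corollary~14 of Poland--Hutter, is the crude form $K_{\rm PH}w_\mu^{-1}$. The ``per-competitor'' form $K_{\rm PH}\sum_{\nu\ne\mu}w_\nu/w_\mu$ is attributed to the internal structure of their argument, with no reference and no derivation. Specialized to a single competitor, it is literally the statement to be proved; this is exactly how the paper uses the lemma after \eqref{eq:threshold-template}. Your fallback, ``re-derive the pairwise statement directly,'' and your claim that the constant $21$ survives are equally unsupported. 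A refinement of that shape may well be provable by redoing the static analysis with unnormalized weights, but that analysis, with its constants, would have to be carried out.

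The missing idea is that no refinement is needed. The crude bound suffices once you restart it at the first crossing time, and the $1/c$ then comes from the rarity of crossing rather than from the weights. Let $\tau:=\inf\{t\ge1:L_t\ge c\}$; there is no loss before $\tau$. Condition on $X_{<\tau}=x$. Since $Q^x(y)/P^x(y)=L_{\tau+\ell(y)}/L_\tau$, the rule ``use $Q$ iff $L_t\ge c$'' is exact static MDL on the continuation pair $\{P^x,Q^x\}$ with weights proportional to $1$ and $L_\tau/c$. Corollary~14 therefore bounds the conditional loss by $K_{\rm PH}(1+L_\tau/c)$. Because $(L_t)$ is a nonnegative $P$-supermartingale with $L_1=1$, optional stopping and Fatou give $\mathbb E_P[L_\tau\mathbf 1\{\tau<\infty\}]\le1$, and hence $\mathbb P_P(\tau<\infty)\le 1/c$. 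Averaging yields $K_{\rm PH}(1/c+1/c)=2K_{\rm PH}/c$, with $K_{\rm PH}=21$ inherited unchanged. Your explicit choice to avoid a martingale argument is precisely what leaves the case $c>1$ open.
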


\begin{proof}
Let $\tau:=\inf\{t\ge1:L_t\ge c\}$ be the first threshold-crossing time. If \(\tau=\infty\), the thresholded loss is
zero.

Condition on \(\{\tau<\infty\}\) and on \(X_{<\tau}=x\). Let \(P^x\) and \(Q^x\) be the conditional continuation measures after history \(x\). For a future string \(y\),
\[
\frac{Q^x(y)}{P^x(y)}
=
\frac{Q(xy)/Q(x)}{P(xy)/P(x)}
=
\frac{L_{\tau+\ell(y)}}{L_\tau}.
\]
Therefore the rule that uses \(Q\) exactly when \(L_{\tau+\ell(y)}\ge c\) is equivalent to exact static MDL on the two-model class \(\{P^x,Q^x\}\) with unnormalized weights
\[
\bar w_P=1,
\qquad
\bar w_Q=\frac{L_\tau}{c}.
\]
Indeed, this static MDL rule selects \(Q^x\) exactly when
\[
\bar w_Q Q^x(y)\ge \bar w_P P^x(y),
\]
which is equivalent to \(L_{\tau+\ell(y)}\ge c\).

By the exact static-MDL quadratic-loss bound of \cite[Corollary 14]{poland2005asymptotics}, the conditional expected cumulative squared loss of this two-model static MDL predictor is at most
\[
K_{\rm PH}\left(1+\frac{L_\tau}{c}\right),
\]
because the normalized weight of the true model \(P^x\) is
$\frac{1}{1+L_\tau/c}$.

It remains to average over the stopping event. With the convention \(L_t=0\) on \(P\)-null histories, \((L_t)_{t\ge1}\) is a nonnegative \(P\)-supermartingale with \(L_1=1\). Optional stopping applied to \(L_{\tau\wedge n}\), followed by Fatou's lemma, gives
\[
\mathbb E_P[L_\tau \cdot \bm{1}_{\{\tau<\infty\}}] \leq \mathbb E_P[\liminf_{n \to \infty}L_{\tau \land n} \cdot \bm{1}_{\{\tau<\infty\}}] \leq \liminf_{n \to \infty}  \mathbb E_P[L_{\tau \land n} \cdot \bm{1}_{\{\tau<\infty\}}]\le1.
\]
Moreover, since \(L_\tau\ge c\) on \(\{\tau<\infty\}\),
$\mathbb P_P(\tau<\infty)\le \frac1c$. Hence
\begin{align*}
    \mathbb E_P
\sum_{t\ge1}
\mathbf 1\{L_t\ge c\}
\sum_{a\in\mathcal X}
\bigl(P(a\mid X_{<t})-Q(a\mid X_{<t})\bigr)^2
&\le
K_{\rm PH}
\left(
\mathbb P_P(\tau<\infty)
+
\frac{\mathbb E_P[L_\tau\cdot \bm{1}_{\{\tau<\infty\}}]}{c}
\right) \\
&\le \frac{2K_{\rm PH}}{c}.
\end{align*}
\end{proof}
\begin{proof}[Proof of Theorem~\ref{thm:intro-positive}]
For each \(\nu\in\mathcal M\), define
\[
L_t^\nu:=\frac{\nu(X_{<t})}{\mu(X_{<t})},
\qquad
c_\nu:=2^{-C}\left(\frac{w_\mu}{w_\nu}\right)^\lambda .
\]
By \eqref{eq:additive-approx-maxform}, on the event \(\{\nu_t=\nu\}\),
\[
w_\nu^\lambda\nu(X_{<t})
\ge
2^{-C}w_\mu^\lambda\mu(X_{<t}),
\]
and hence \(L_t^\nu\ge c_\nu\). Therefore
\begin{equation}\label{eq:threshold-template}
\begin{aligned}
S_\infty(\mu,\rho)
&\le
\sum_{\nu\in\mathcal M}
\mathbb E_\mu\sum_{t\ge1}
\mathbf 1\{L_t^\nu\ge c_\nu\}
\sum_{a\in\mathcal X}
\bigl(\mu(a\mid X_{<t})-\nu(a\mid X_{<t})\bigr)^2 .
\end{aligned}
\end{equation}
This is the common threshold template for all \(\lambda\ge1\).

We combine the two pairwise threshold estimates directly. Fix \(0<p<1\) and set
\[
\alpha:=\frac{p}{\lambda},
\qquad
r_\nu:=\frac{w_\nu}{w_\mu}.
\]
By Lemmas~\ref{lem:power-threshold-loss} and~\ref{lem:threshold-lr-loss},
\[
\mathbb E_\mu\sum_{t\ge1}
\mathbf 1\{L_t^\nu\ge c_\nu\}
\sum_{a\in\mathcal X}
\bigl(\mu(a\mid X_{<t})-\nu(a\mid X_{<t})\bigr)^2
\le
\min\{A_pr_\nu^p,Br_\nu^\lambda\},
\]
where
\[
A_p:=
\frac{2\lambda^2}{p(\lambda-p)}2^{pC/\lambda},
\qquad
B:=2K_{\rm PH}2^C.
\]
For \(0<p<1\le\lambda\), the elementary interpolation inequality
\[
\min\{A_pr^p,Br^\lambda\}
\le
A_p^{\frac{\lambda-1}{\lambda-p}}
B^{\frac{1-p}{\lambda-p}}r
\qquad(r>0)
\]
holds. Indeed, the left-hand side divided by \(r\) is bounded by the value at the crossing point of \(A_pr^p\) and \(Br^\lambda\). Hence
\[
S_\infty(\mu,\rho)
\le
2^{C/\lambda}
\left(
\frac{2\lambda^2}{p(\lambda-p)}
\right)^{\frac{\lambda-1}{\lambda-p}}
(2K_{\rm PH})^{\frac{1-p}{\lambda-p}}
w_\mu^{-1}.
\]
In particular, taking \(p=1/2\) gives
\[
S_\infty(\mu,\rho)
\le
2^{C/\lambda}
\left(
\frac{4\lambda^2}{\lambda-\frac12}
\right)^{\frac{\lambda-1}{\lambda-\frac12}}
(2K_{\rm PH})^{\frac{1}{2\lambda-1}}
w_\mu^{-1}.
\]
Let
\[
C_\lambda:=
\left(
\frac{4\lambda^2}{\lambda-\frac12}
\right)^{\frac{\lambda-1}{\lambda-\frac12}}
(2K_{\rm PH})^{\frac{1}{2\lambda-1}}.
\]
For \(\lambda\ge1\), we have
\[
0\le \frac{\lambda-1}{\lambda-\frac12}\le1,
\qquad
\frac{4\lambda^2}{\lambda-\frac12}\le 8\lambda,
\qquad
\frac1{2\lambda-1}\le1.
\]
Thus \(C_\lambda\le16K_{\rm PH}\lambda\), and therefore
\[
S_\infty(\mu,\rho)
\le
C_\lambda 2^{C/\lambda}w_\mu^{-1}
\le
16K_{\rm PH}\lambda\,2^{C/\lambda}w_\mu^{-1}.
\]
This proves the theorem.
\end{proof}

\begin{proof}[Proof of Proposition\ref{prop:additive-positive-sharpness}]
Let
\[
A:=2^{C/\lambda}(w^{-1}-1).
\]
We first prove the following floor lower bound: for every integer \(k\ge1\), there is a finite binary model class, weights, a true environment \(\mu\) with \(w_\mu=w\), and a valid additive-\(C\) Approx-MDL predictor
\(\rho\) such that
\begin{equation}
\label{eq:sharpness-floor-bound}
S_\infty(\mu,\rho)
\ge
\frac{k}{2}
\left\lfloor
2^{(C+1-k)/\lambda}(w^{-1}-1)
\right\rfloor .
\end{equation}

Let the true environment be the deterministic all-zero measure on the binary alphabet,
\[
\mu(000\cdots)=1,
\qquad
w_\mu=w.
\]
Fix \(k\ge1\). Set
\[
r:=2^{(k-1-C)/\lambda},
\qquad
m:=
\left\lfloor
\frac{1-w}{rw}
\right\rfloor
=
\left\lfloor
2^{(C+1-k)/\lambda}(w^{-1}-1)
\right\rfloor .
\]
Add \(m\) false models \(\nu_1,\ldots,\nu_m\), each with weight
\[
w_{\nu_i}:=rw.
\]
Then
\[
w_\mu+\sum_{i=1}^m w_{\nu_i}
=
w+mrw
\le
w+(1-w)
=
1.
\]
If \(m=0\), then the floor lower bound \eqref{eq:sharpness-floor-bound} is trivial. Hence assume \(m\ge1\).

The false models make their errors in round-robin order. For
\[
t_{i,j}:=(j-1)m+i,
\qquad
1\le i\le m,\quad 1\le j\le k,
\]
define \(\nu_i\) along the true path by
\[
\nu_i(0\mid 0^{t_{i,j}-1})=\frac12,
\qquad
\nu_i(1\mid 0^{t_{i,j}-1})=\frac12
\]
at its designated error times, and by
\[
\nu_i(0\mid 0^{t-1})=1
\]
at all other times along the path \(0^\infty\). Away from the path \(0^\infty\), define \(\nu_i\) arbitrarily, say deterministically toward \(0\). This defines a probability measure on \(\{0,1\}^\infty\).

We now define the selector. At history \(0^{t_{i,j}-1}\), select \(\nu_i\). At all other histories, select an exact minimizer of the MDL objective over the finite class. We verify that the prescribed selections are valid additive-\(C\) Approx-MDL choices.

Before time \(t_{i,j}\), model \(\nu_i\) has made exactly \(j-1\) fair-coin mistakes along the true path. Therefore
\[
\frac{\nu_i(0^{t_{i,j}-1})}{\mu(0^{t_{i,j}-1})}
=
2^{-(j-1)}.
\]
Its weighted likelihood relative to the true model is
\[
A_j
:=
\frac{w_{\nu_i}^{\lambda}\nu_i(0^{t_{i,j}-1})}
     {w_\mu^\lambda\mu(0^{t_{i,j}-1})}
=
r^\lambda 2^{-(j-1)}
=
2^{k-j-C}.
\]
Since \(j\le k\), we have
\[
A_j\ge 2^{-C}.
\]

We also need admissibility relative to the global maximum, not merely relative to the true model. At the \(i\)-th step of round \(j\), the models \(\nu_1,\ldots,\nu_{i-1}\) have already been used in the current round and have therefore made \(j\) mistakes, while \(\nu_i,\ldots,\nu_m\) have made
only \(j-1\) mistakes. Thus, the selected model \(\nu_i\) attains the largest weighted likelihood among the false models. Hence, the global maximum, measured relative to the true model, is at most \(\max\{1, A_j\}\). Since
\[
A_j\ge 2^{-C}\max\{1,A_j\},
\]
the choice of \(\nu_i\) satisfies
\[
w_{\nu_i}^{\lambda}\nu_i(0^{t_{i,j}-1})
\ge
2^{-C}
\max_{\eta\in\mathcal M}
w_\eta^\lambda\eta(0^{t_{i,j}-1}).
\]
Therefore, the selector is a valid additive-\(C\) Approx-MDL selector.

Each designated selection incurs a squared loss
\[
\left(1-\frac12\right)^2+\left(0-\frac12\right)^2
=
\frac12.
\]
Since \(\mu\) is deterministic, the expectation is concentrated on the path \(0^\infty\). There are \(km\) designated error times, so
\[
S_\infty(\mu,\rho)
\ge
\frac{km}{2}
=
\frac{k}{2}
\left\lfloor
2^{(C+1-k)/\lambda}(w^{-1}-1)
\right\rfloor .
\]
This proves \eqref{eq:sharpness-floor-bound}.

It remains to derive the stated simplified bound. Recall that
\[
A=2^{C/\lambda}(w^{-1}-1).
\]
If \(A\le1\), then
\[
A-1\le0,
\]
and the claimed lower bound
\[
S_\infty(\mu,\rho)\ge c\lambda(A-1)
\]
is trivial, since \(S_\infty(\mu,\rho)\ge0\). If the statement is written with the positive part \((A-1)_+\), the same case gives the trivial bound \(S_\infty(\mu,\rho)\ge0\).

Now suppose \(A>1\). We split into two subcases. First assume \(1<A\le 4e\). Choose $k:=\left\lfloor 1+\lambda\log_2 A\right\rfloor$. Then $k\ge1$, and since \(k\le 1+\lambda\log_2 A\),
\[
2^{(C+1-k)/\lambda}(w^{-1}-1)
=
A\,2^{(1-k)/\lambda}
\ge
1.
\]
Therefore the floor in \eqref{eq:sharpness-floor-bound} is at least \(1\), and hence
\[
S_\infty(\mu,\rho)
\ge
\frac{k}{2}
\ge
\frac{\lambda}{2}\log_2 A.
\]
Since the function $A\mapsto \frac{\log_2 A}{A-1}$ is positive and continuous on the compact interval \((1,4e]\) after taking
its limit at \(A=1\), there exists an absolute constant \(\gamma>0\) such
that
\[
\log_2 A\ge \gamma(A-1)
\qquad
\text{for all }1<A\le4e.
\]
Thus
\[
S_\infty(\mu,\rho)
\ge
\frac{\gamma}{2}\lambda(A-1).
\]

Next assume \(A>4e\). Choose $k:=\left\lfloor \frac{\lambda}{\ln 2}\right\rfloor$. Since \(\lambda\ge1\), this gives \(k\ge1\) and
\[
k\ge \frac{\lambda}{2\ln 2}.
\]
Moreover, \(k\le \lambda/\ln 2\), and hence
\[
2^{-k/\lambda}\ge 2^{-1/\ln 2}=e^{-1}.
\]
Therefore
\[
2^{(C+1-k)/\lambda}(w^{-1}-1)
=
A\,2^{(1-k)/\lambda}
\ge
A e^{-1}
>
4.
\]
Thus, the floor is at least half of the quantity inside it:
\[
\left\lfloor
2^{(C+1-k)/\lambda}(w^{-1}-1)
\right\rfloor
\ge
\frac12 A\,2^{(1-k)/\lambda}
\ge
\frac{A}{2e}.
\]
Using \eqref{eq:sharpness-floor-bound}, we obtain
\[
S_\infty(\mu,\rho)
\ge
\frac{k}{2}\cdot \frac{A}{2e}
\ge
\frac{\lambda}{2\ln 2}\cdot \frac{A}{4e}
=
\frac{\lambda A}{8e\ln 2}.
\]
Since \(A>1\), this also implies
\[
S_\infty(\mu,\rho)
\ge
\frac{\lambda(A-1)}{8e\ln 2}.
\]

Combining the two subcases, there exists an absolute constant \(c>0\) such that, for all \(A>1\),
\[
S_\infty(\mu,\rho)
\ge
c\lambda(A-1).
\]
Together with the trivial case \(A\le1\), this proves
\[
S_\infty(\mu,\rho)
\ge
c\lambda
\Bigl(2^{C/\lambda}(w_\mu^{-1}-1)-1\Bigr).
\]
If the proposition is stated with a positive part, the same proof gives
\[
S_\infty(\mu,\rho)
\ge
c\lambda
\Bigl(2^{C/\lambda}(w_\mu^{-1}-1)-1\Bigr)_+.
\]
This proves the proposition.
\end{proof}

\begin{proof}[Proof of Corollary~\ref{cor:intro-universal}]\label{app:proof-cor-universal}
For the universal class with \(w_\nu=2^{-K(\nu)}\), Kraft's inequality for the underlying prefix machine gives
\(\sum_{\nu\in\mathcal M} w_\nu\le 1\). Since the true estimable environment
\(\mu\) belongs to \(\mathcal M\), Theorem~\ref{thm:intro-positive} yields
\[
S_\infty(\mu,\rho)=\sum_{t\ge1} e_t(\mu,\rho)<\infty .
\]
As each \(e_t(\mu,\rho)\ge0\), the terms of this convergent nonnegative series
satisfy \(e_t(\mu,\rho)\to0\).
\end{proof}

\begin{lemma}[Deterministic refinement of an estimable measure]
\label{lem:deterministic-refinement}
There exists a constant \(c_{\rm det}<\infty\) such that for every estimable
measure \(\nu\) and every finite binary string \(x\) with \(\nu(x)>0\), there is
a deterministic estimable measure \(\delta_{\omega}\) satisfying
\(\delta_{\omega}(x)=1\) and
\[
K(\delta_{\omega})
\le
K(\nu)
+
\log_2\nu(x)^{-1}
+
c_{\rm det}\log_2\!\bigl(\log_2\nu(x)^{-1}+2\bigr)
+
c_{\rm det}.
\]
\end{lemma}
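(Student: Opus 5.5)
The approach is to describe a deterministic infinite sequence $\omega$ extending $x$ by a short program: a description of $\nu$, followed by an arithmetic-coding style index of $x$ relative to $\nu$, followed by a fixed continuation rule (say, append zeros forever). Concretely, we take $\omega := x0^\infty$, so that $\delta_\omega(x)=1$. The task is then to bound $K(\delta_\omega)$ by exhibiting a monotone machine that, with the fair-coin input ignored, outputs $x0^\infty$. By Lemma~\ref{lem:li} and Definition~\ref{def:measure-complexity}, this machine's encoding length upper bounds $K(\delta_\omega)$ up to an additive constant; the footnote equivalence with $K^c$ lets us work with whichever form is convenient.

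\emph{Step 1 (Shannon--Fano--Elias code for $x$ under $\nu$).} Let $n=\ell(x)$ and $k:=\lceil\log_2\nu(x)^{-1}\rceil+1$. The cylinders $[y]$ for $y\in\{0,1\}^n$ partition $[0,1)$ into intervals $I_y$ of length $\nu(y)$, ordered lexicographically. The interval $I_x$ has length at least $2^{-k+1}$, so it contains a dyadic interval $[z]$ with $\ell(z)=k$. Since $\nu$ is only estimable, not computable exactly, we must guarantee that decoding is possible with finite precision. The plan is to pick a $k$-bit $z$ whose dyadic interval lies well inside $I_x$: take $k$ two bits larger and choose $z$ so that its interval sits at distance at least $2^{-k}$ from both endpoints of $I_x$. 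Given $z$ and $n$, a decoder can then approximate the cumulative sums $\sum_{y<y'}\nu(y)$ to precision $2^{-k-3}$ and identify $x$ unambiguously as the unique $y$ whose approximate interval contains the midpoint of $[z]$ with margin. This extra slack is the step I expect to be the main obstacle. I would resolve it by the margin argument just described, since it costs only $O(1)$ bits, and by having the decoder search over strings of length $n$, which takes finite time.

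\emph{Step 2 (assembling the program).} The program consists of: the shortest machine description $\langle T_\nu\rangle$, of length $K(\nu)$ (converted to a computing program at $O(1)$ cost); a self-delimiting encoding of $k$, of length $O(\log k)$; a self-delimiting encoding of $n$; and the $k$ bits of $z$. A fixed decoder machine parses these pieces, recovers $x$ as in Step 1, and prints $x0^\infty$. The encoding of $n$ is a problem because $n$ is not bounded by $\log\nu(x)^{-1}$ (for instance, $\nu$ may be nearly deterministic). To avoid it, I would make $n$ unnecessary: the decoder instead searches for the shortest prefix $y$ whose (approximate) interval fits inside a suitable enlargement of $[z]$ while still being the natural refinement determined by $[z]$. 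Alternatively, I would choose $z$ relative to the family of all cylinders and use the monotone $\nu$-code (as in Lemma~\ref{lem:coding-relative-to-measure}). The cleanest route is to apply the argument of Lemma~\ref{lem:coding-relative-to-measure} in a prefix-free conditional form: given $\nu$ and $k$, the set of strings $x$ with $\nu(x)\ge 2^{-k}$ and maximal in a suitable sense can be enumerated, and $x$ can be specified by $k+O(1)$ bits via the Kraft--Chaitin construction. Any ambiguity about where $x$ ends I would resolve by also encoding $n-\ell$ relative to the decoded maximal string, again at $O(\log k)$ cost, or by noting that any extension of $x$ works equally well, since we only need $\delta_\omega(x)=1$. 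The last observation is the key simplification: we may output any $\omega\in[x]$, so decoding a longer string $x'$ that extends $x$ is fine.

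\emph{Step 3 (counting).} The total length is $K(\nu)+k+O(\log k)+O(1)$. Since $k\le\log_2\nu(x)^{-1}+O(1)$, this gives $K(\delta_\omega)\le K(\nu)+\log_2\nu(x)^{-1}+c_{\rm det}\log_2(\log_2\nu(x)^{-1}+2)+c_{\rm det}$. The $+2$ inside the logarithm handles the case $\nu(x)=1$, where $k$ is bounded.
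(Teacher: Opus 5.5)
Your accounting is sound: an index $z$ of length $k=\log_2\nu(x)^{-1}+O(1)$ chosen with a margin inside $I_x$, a self-delimiting code for it at cost $O(\log k)$, and a computing description of $\nu$. Your closing remark that any $\omega\in[x]$ will do is exactly what the paper exploits.

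The gap is that you never give a decoder that, from $\nu$ and $z$ alone and without knowing $n=\ell(x)$, outputs an \emph{infinite} computable sequence and provably never stalls. This, not finite precision, is the real obstacle. Your Step 1 decoder needs $n$, and the substitutes you list fail. Take $\nu=\delta_{0^\infty}$ and $x=0^n$. Then $I_{0^j}=[0,1)$ for every $j$, which breaks each alternative:
\begin{itemize}
\item There is no ``maximal'' string with $\nu(y)\ge 2^{-k}$.
\item The strings with $\nu(y)>2^{-k}$ form an infinite set, so Kraft--Chaitin requests of length about $k$ for all of them violate Kraft's inequality.
\item The intervals along the only path meeting the dyadic interval of $z$ never shrink. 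A search for a shortest $y$ on that path whose interval fits inside an enlargement of it therefore never halts, and the machine prints nothing.
\end{itemize}
Lemma~\ref{lem:coding-relative-to-measure} does not fill the hole either. A $K_m$-program is only guaranteed to have output extending $x$, possibly finite, and it is not a self-delimiting generator description, so it gives no bound on $K(\delta_\omega)$. Likewise, ``decode a longer $x'$ and pad with zeros'' requires a stopping rule, which is precisely what is missing. As you noticed, the choice $\omega=x0^\infty$ has to be abandoned.

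The missing step is to decode forever while making each individual step terminate. Your margin protects only the first $n$ steps. Afterwards the midpoint $m$ of the dyadic interval of $z$ may coincide with the split point of some $I_y$, and an exact comparison of $m$ with that computable real can diverge.

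A simple repair: with current output $y$, compute the split point of $I_y$ to the fixed precision $2^{-k-3}$, and emit $0$ or $1$ according to which side $m$ lies on. Each step halts, so the result is an infinite computable $\omega$. For $i\le n$, the split point of $I_{x_{<i}}$ is an endpoint of $I_{x_{1:i}}\supseteq I_x$, hence at distance at least $1.5\cdot 2^{-k}$ from $m$. This forces the $i$th bit to be $x_i$.

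The paper does the same with intervals instead of a point. It keeps a positive-length rational interval $J_y\subseteq J\cap I_y$, where $J$ is dyadic with closure in $\operatorname{int}(I_x)$. It then dovetails over a bit $a$ and a rational open interval $J'$ with $\overline{J'}\subseteq J_y\cap\operatorname{int}(I_{ya})$. The search halts because one child meets $J_y$ in positive length, and the strict containment forces the first $\ell(x)$ bits. With either version of this step, your count in Step 3 goes through unchanged.
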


\begin{proof}
By the equivalence between generative and probability-computing descriptions for
estimable measures, we may use a probability-computing description of \(\nu\)
of length \(K(\nu)+O(1)\). This \(O(1)\) overhead is absorbed into
\(c_{\rm det}\).

Let \(B=-\log_2\nu(x)\). From the probability-computing description of \(\nu\),
construct the standard arithmetic intervals
\((I_y)_{y\in\{0,1\}^*}\), whose endpoints are uniformly computable and whose
lengths satisfy \(|I_y|=\nu(y)\). Since \(|I_x|=\nu(x)=2^{-B}>0\), a standard
dyadic-interval covering argument gives a binary string \(r\) such that the
dyadic interval \(J=J_r\) has closure contained in the interior of \(I_x\) and
\[
|r|\le \lceil B\rceil+c_0
\]
for a universal constant \(c_0\).

We hard-wire \(r\) into the decoder. The decoder does not need to know \(x\).
It maintains a rational interval \(J_y\) after having output the prefix \(y\),
with \(J_y\subseteq J\cap I_y\) and \(|J_y|>0\). Initially,
\(y=\epsilon\) and \(J_\epsilon=J\).

Given \(J_y\subseteq I_y\), the decoder dovetails over
\(a\in\{0,1\}\) and rational open intervals \(J'\). For each candidate \(J'\),
it uses computable approximations to the endpoints of \(I_{ya}\) to certify the
strict containment
\[
\overline{J'}\subseteq J_y\cap \operatorname{int}(I_{ya}).
\]
This condition is semidecidable because the containment is strict. Moreover,
such a pair \((a,J')\) exists: the interval \(J_y\) has positive length and the
children \(I_{y0}\) and \(I_{y1}\) partition \(I_y\) up to endpoints, so one of
the intersections \(J_y\cap I_{ya}\) has positive interior. Hence the dovetailing
search halts. The decoder outputs this bit \(a\) and sets \(J_{ya}=J'\).

Iterating this procedure produces a computable infinite sequence \(\omega\).
Because \(J\) is contained in the interior of \(I_x\), during the first
\(|x|\) steps only the branch consistent with \(x\) can have positive
intersection with the maintained interval. Hence the first \(|x|\) bits of
\(\omega\) are exactly \(x\), and therefore \(\delta_\omega(x)=1\).

Since \(\omega\) is computable, the Dirac measure \(\delta_\omega\) is estimable:
on input \(y\), compute the first \(|y|\) bits of \(\omega\) and check whether
\(y\) is a prefix.

The generator for \(\delta_\omega\) consists of the probability-computing
description of \(\nu\), the self-delimiting encoding of \(r\), and a fixed
decoder. The string \(r\) costs
\[
|r|+O(\log(|r|+2))
\]
bits to encode self-delimitingly. Since \(|r|\le \lceil B\rceil+c_0\), the
description length is
\[
K(\nu)+B+O(\log(B+2)).
\]
Absorbing all universal constants into \(c_{\rm det}\) proves the claim.
\end{proof}

\begin{proof}[Proof of Lemma~\ref{lem:pointwise-overfitting}]
Let \(c_{\rm code}\) be the constant in
Lemma~\ref{lem:coding-relative-to-measure}.

First we prove the lower bound on the optimum. For any \(\nu\in\mathcal M\),
Lemma~\ref{lem:coding-relative-to-measure} gives
\[
K_m(x)
\le
K(\nu)+\log_2\nu(x)^{-1}+c_{\rm code}.
\]
Multiplying by \(\lambda\in(0,1)\) and using
\[
\lambda\log_2\nu(x)^{-1}
\le
\log_2\nu(x)^{-1},
\]
we get
\[
\lambda K_m(x)
\le
\lambda K(\nu)+\log_2\nu(x)^{-1}+\lambda c_{\rm code}
=
L_\lambda(\nu;x)+\lambda c_{\rm code}.
\]
Taking the minimum over \(\nu\) yields
\[
\operatorname{OPT}_\lambda(x)
\ge
\lambda K_m(x)-\lambda c_{\rm code}.
\]
Thus the second claim holds with \(c_\lambda=\lambda c_{\rm code}\).

We now prove the deterministic approximate-minimizer claim. Let
\(\nu^\star\) be an exact minimizer of \(L_\lambda(\cdot;x)\), and write
\[
B:=
\log_2\nu^\star(x)^{-1}.
\]
Since the universal class contains deterministic extensions of \(x\), the
optimum is finite, and hence \(B<\infty\). By
Lemma~\ref{lem:deterministic-refinement}, there exists a deterministic
estimable measure \(\nu_x\) with \(\nu_x(x)=1\) and
\[
K(\nu_x)
\le
K(\nu^\star)
+
B
+
c_{\rm det}\log_2(B+2)
+
c_{\rm det}.
\]
Therefore,
\[
L_\lambda(\nu_x;x)
=
\lambda K(\nu_x)
\le
\lambda K(\nu^\star)
+
\lambda B
+
\lambda c_{\rm det}\log_2(B+2)
+
\lambda c_{\rm det}.
\]
Since
\[
\operatorname{OPT}_\lambda(x)
=
L_\lambda(\nu^\star;x)
=
\lambda K(\nu^\star)+B,
\]
we obtain
\[
L_\lambda(\nu_x;x)-\operatorname{OPT}_\lambda(x)
\le
\lambda c_{\rm det}\log_2(B+2)
+
\lambda c_{\rm det}
-
(1-\lambda)B.
\]
Because \(0<\lambda<1\), the right-hand side is bounded above uniformly over
\(B\ge0\). Define
\[
C_\lambda
:=
\sup_{B\ge0}
\left\{
\lambda c_{\rm det}\log_2(B+2)
+
\lambda c_{\rm det}
-
(1-\lambda)B
\right\}
<\infty.
\]
Then
\[
L_\lambda(\nu_x;x)
\le
\operatorname{OPT}_\lambda(x)+C_\lambda,
\]
as required.
\end{proof}

\begin{proof}[Proof of Theorem~\ref{thm:intro-negative-additive}]
Let the true environment be the fair-coin measure
\[
\mu:=\mathrm{Bern}(1/2)^\infty.
\]
This measure is estimable, so \(\mu\in\mathcal M\). For each history
\(x\in\{0,1\}^*\), let \(S(x):=\nu_x\) be the deterministic measure constructed
in Lemma~\ref{lem:pointwise-overfitting}. The lemma implies that there exists a
constant \(C<\infty\), depending only on \(\lambda\) and the reference machines,
such that for every history \(x\),
\[
L_\lambda(S(x);x)
\le
\operatorname{OPT}_\lambda(x)+C.
\]
Hence \(S\) is a valid additive-\(C\) Approx-MDL selector. Define the predictor by
\[
\rho_t(\cdot\mid x_{<t}) := S(x_{<t})(\cdot\mid x_{<t}).
\]

Now fix any history $x$. Since $S(x)$ is deterministic, there exists a bit
$b(x)\in\{0,1\}$ such that
\[
S(x)(\cdot\mid x)=\delta_{b(x)}.
\]
On the other hand, the true environment $\mu$ always predicts
\[
\mu(\cdot\mid x)=(1/2,1/2).
\]
Therefore the one-step squared loss is exactly
\[
\sum_{a\in\{0,1\}}
\Bigl(\mu(a\mid x)-S(x)(a\mid x)\Bigr)^2
=
\left(\frac12-1\right)^2+\left(\frac12-0\right)^2
=
\frac12.
\]
This holds for every history $x$, so for every $t\ge1$, the expected error remains constant:
\[
\mathbb E_\mu\!\left[
\sum_{a\in\{0,1\}}
\Bigl(\mu(a\mid X_{<t})-\rho_t(a\mid X_{<t})\Bigr)^2
\right]
=
\frac12.
\]
Hence, the cumulative squared error diverges:
\[
S_\infty(\mu,\rho)
=
\sum_{t\ge1}\frac12
=
+\infty.
\]
Because the expected single-step prediction error fails to converge to zero, the predictor $\rho$ does not universally learn the estimable environment $\mu$.
\end{proof}

\section{Full proofs for multiplicative Approx-MDL}
\label{app:multiplicative-proofs}

\begin{proof}[Proof of Lemma~\ref{lem:weighted-mdl-lower-tail}]
Define the mixture over all estimable measures weighted by their prior:
\[
\xi_{\lambda}(x):=\sum_{\nu\in\mathcal M}2^{-\lambda K(\nu)}\nu(x).
\]
For every $n$,
\[
\sum_{x\in\mathcal X^n}\xi_{\lambda}(x)
=\sum_{\nu\in\mathcal M}2^{-\lambda K(\nu)}\sum_{x\in\mathcal X^n}\nu(x)
\le
\sum_{\nu\in\mathcal M}2^{-\lambda K(\nu)}
\le
\sum_{\nu\in\mathcal M}2^{-K(\nu)}
\le 1,
\]
where we used $\lambda\ge1$ and Kraft's inequality for the prefix-free codes underlying the Kolmogorov complexity. Thus $\xi_{\lambda}$ is a semimeasure.

For any $\nu\in\mathcal M$ and any string $x$,
\[
\xi_{\lambda}(x)\ge 2^{-\lambda K(\nu)}\nu(x),
\]
so
\[
-\log_2\xi_{\lambda}(x)
\le
\lambda K(\nu)+\log_2\nu(x)^{-1}
=
L_{\lambda}(\nu;x).
\]
Taking the minimum over $\nu$ gives
\begin{equation}\label{eq:xi-below-opt}
-\log_2\xi_{\lambda}(x)
\le
\operatorname{OPT}_{\lambda}(x).
\end{equation}

Now define the nonnegative random variable
\[
Z_n:=\frac{\xi_{\lambda}(X_{1:n})}{\mu(X_{1:n})},
\]
with the convention $Z_n:=0$ whenever $\mu(X_{1:n})=0$. Then
\[
\mathbb E_\mu[Z_n]
=
\sum_{x\in\mathcal X^n}\mu(x)\frac{\xi_{\lambda}(x)}{\mu(x)}
=
\sum_{x\in\mathcal X^n}\xi_{\lambda}(x)
\le 1.
\]
By Markov's inequality,
\[
\Pr_\mu[Z_n\ge 2^c]\le 2^{-c}.
\]
Equivalently, with probability at least $1-2^{-c}$,
\[
\xi_{\lambda}(X_{1:n})\le 2^c\mu(X_{1:n}),
\]
that is,
\[
-\log_2\mu(X_{1:n})
\le
-\log_2\xi_{\lambda}(X_{1:n})+c.
\]
Combining this with \eqref{eq:xi-below-opt} yields
\[
\operatorname{OPT}_{\lambda}(X_{1:n})
\ge
\log_2\mu(X_{1:n})^{-1}-c
\]
with probability at least $1-2^{-c}$.
\end{proof}

\begin{proof}[Proof of Theorem~\ref{thm:intro-negative-multiplicative}]
We work on the binary alphabet $\mathcal X=\{0,1\}$, and let
\[
\mu:=\operatorname{Bern}(1/2)^\infty
\]
be the fair-coin product measure. This environment is estimable, hence
$\mu\in\mathcal M$. We split the proof into two cases.

\paragraph{Case 1: $0<\lambda<1$.}
Here, we use the pointwise overfitting construction from the additive lower bound. By Lemma~\ref{lem:pointwise-overfitting}, there exists a constant $C_0<\infty$ such that for every finite string $x\in\{0,1\}^*$, there is a
deterministic estimable measure $\nu_x\in\mathcal M$ satisfying
\[
L_\lambda(\nu_x;x)\le \operatorname{OPT}_\lambda(x)+C_0.
\]
Moreover, the same lemma gives a constant $C_1<\infty$ such that
\[
\operatorname{OPT}_\lambda(x)\ge \lambda K_m(x)-C_1
\qquad\text{for all }x\in\{0,1\}^*.
\]

For $X_{1:n}\sim\mu$, the monotone complexity $K_m(X_{1:n})$ tends to infinity
in probability. Indeed, for every fixed $M<\infty$, the number of strings
$x\in\{0,1\}^n$ with $K_m(x)\le M$ is at most $2^{M+1}$, and each such string
has $\mu$-probability $2^{-n}$. Hence
\[
\mathbb P_\mu[K_m(X_{1:n})\le M]\le 2^{M+1-n}\to0.
\]
Combining this with the lower bound
\[
\operatorname{OPT}_\lambda(x)\ge \lambda K_m(x)-C_1
\]
shows that
\[
\operatorname{OPT}_\lambda(X_{1:n})\to\infty
\qquad\text{in }\mu\text{-probability}.
\]

Define the event
\[
A_n:=
\left\{
\operatorname{OPT}_\lambda(X_{1:n})\ge \frac{C_0}{\epsilon}
\right\}.
\]
Then
\[
\mathbb P_\mu(A_n)\to1.
\]
On $A_n$, the deterministic overfitting model $\nu_{X_{1:n}}$ satisfies
\[
L_\lambda(\nu_{X_{1:n}};X_{1:n})
\le
\operatorname{OPT}_\lambda(X_{1:n})+C_0
\le
(1+\epsilon)\operatorname{OPT}_\lambda(X_{1:n}).
\]
Thus, with probability tending to one, $\nu_{X_{1:n}}$ is a valid
multiplicative-$(1+\epsilon)$ Approx-MDL choice.

Now define a selector $S$ by
\[
S(x):=
\begin{cases}
\nu_x, & \text{if }
L_\lambda(\nu_x;x)\le (1+\epsilon)\operatorname{OPT}_\lambda(x),\\
\nu_x^\star, & \text{otherwise},
\end{cases}
\]
where $\nu_x^\star$ is any exact minimizer of $L_\lambda(\cdot;x)$. Such an
exact minimizer exists for $\lambda>0$. Define
\[
\rho_{n+1}(\cdot\mid x_{1:n}) := S(x_{1:n})(\cdot\mid x_{1:n}).
\]
By construction, $\rho$ is a valid multiplicative-$(1+\epsilon)$ Approx-MDL
predictor.

On the event $A_n$, the selected model is the deterministic measure
$\nu_{X_{1:n}}$. Therefore there exists a bit $b(X_{1:n})\in\{0,1\}$ such that
\[
\rho_{n+1}(\cdot\mid X_{1:n})=\delta_{b(X_{1:n})}.
\]
Since the true conditional distribution is
\[
\mu(\cdot\mid X_{1:n})=(1/2,1/2),
\]
The squared prediction error on $A_n$ equals
\[
\sum_{a\in\{0,1\}}
\bigl(\mu(a\mid X_{1:n})-\rho_{n+1}(a\mid X_{1:n})\bigr)^2
=
\frac12.
\]
Hence
\[
\mathbb E_\mu
\left[
\sum_{a\in\{0,1\}}
\bigl(\mu(a\mid X_{1:n})-\rho_{n+1}(a\mid X_{1:n})\bigr)^2
\right]
\ge
\frac12\,\mathbb P_\mu(A_n).
\]
Since $\mathbb P_\mu(A_n)\to1$, the expected one-step squared prediction error
is at least $1/4$ for all sufficiently large $n$. Summing over $n$ gives
\[
S_\infty(\mu,\rho)=+\infty.
\]

\paragraph{Case 2: $\lambda\ge1$.}
It remains to handle the strongly regularized regime. Let again
\[
\mu=\operatorname{Bern}(1/2)^\infty.
\]
Choose a rational number $\theta\in(0,1)\setminus\{1/2\}$ sufficiently close to
$1/2$ such that the cross-entropy
\[
h_\theta
:=
\frac12\log_2\theta^{-1}
+
\frac12\log_2(1-\theta)^{-1}
\]
satisfies
\[
h_\theta<1+\frac{\epsilon}{4}.
\]
This is possible by continuity of $h_\theta$ at $\theta=1/2$ and by density of
the rationals. Let
\[
\nu_\theta:=\operatorname{Bern}(\theta)^\infty.
\]
Since $\theta$ is rational, $\nu_\theta$ is estimable, and $K(\nu_\theta)<\infty$.

Choose $N_0$ such that, for all $n\ge N_0$,
\[
\lambda K(\nu_\theta)
\le
\frac{\epsilon}{2}n-4(1+\epsilon).
\]
Under $\mu$, the empirical negative log-likelihood under $\nu_\theta$ converges
in probability to $h_\theta$:
\[
-\frac1n\log_2\nu_\theta(X_{1:n})\to h_\theta
\qquad\text{in }\mu\text{-probability}.
\]
Hence there exists $N_1$ such that, for all $n\ge N_1$,
\[
\mathbb P_\mu
\left[
-\log_2\nu_\theta(X_{1:n})
\le
\left(1+\frac{\epsilon}{2}\right)n
\right]
\ge
\frac9{10}.
\]

For $n\ge N:=\max\{N_0,N_1\}$, define
\[
A_n:=
\left\{
-\log_2\nu_\theta(X_{1:n})
\le
\left(1+\frac{\epsilon}{2}\right)n
\right\}.
\]
On $A_n$,
\[
\begin{aligned}
L_\lambda(\nu_\theta;X_{1:n})
&=
\lambda K(\nu_\theta)+\log_2\nu_\theta(X_{1:n})^{-1} \\
&\le
\frac{\epsilon}{2}n-4(1+\epsilon)
+
\left(1+\frac{\epsilon}{2}\right)n \\
&=
(1+\epsilon)n-4(1+\epsilon) \\
&=
(1+\epsilon)(n-4).
\end{aligned}
\]

Next define
\[
B_n:=\{\operatorname{OPT}_\lambda(X_{1:n})\ge n-4\}.
\]
Since $\mu(X_{1:n})=2^{-n}$ for every binary string of length $n$, Lemma~\ref{lem:weighted-mdl-lower-tail} with $c=4$ gives
\[
\mathbb P_\mu(B_n)\ge 1-2^{-4}=\frac{15}{16}.
\]
Therefore, on $A_n\cap B_n$,
\[
L_\lambda(\nu_\theta;X_{1:n})
\le
(1+\epsilon)(n-4)
\le
(1+\epsilon)\operatorname{OPT}_\lambda(X_{1:n}).
\]
By the union bound,
\[
\mathbb P_\mu(A_n\cap B_n)
\ge
\frac9{10}+\frac{15}{16}-1
=
\frac{67}{80}.
\]

Define the selector $S$ by
\[
S(x):=
\begin{cases}
\nu_\theta, & \text{if }
L_\lambda(\nu_\theta;x)\le(1+\epsilon)\operatorname{OPT}_\lambda(x),\\
\nu_x^\star, & \text{otherwise},
\end{cases}
\]
where $\nu_x^\star$ is any exact minimizer of $L_\lambda(\cdot;x)$. Define
\[
\rho_t(\cdot\mid x_{<t}) := S(x_{<t})(\cdot\mid x_{<t}).
\]
By construction, $\rho$ is a valid multiplicative-$(1+\epsilon)$ Approx-MDL
predictor.

For every $t=n+1\ge N+1$, the preceding bound implies
\[
\mathbb P_\mu(S(X_{<t})=\nu_\theta)\ge \frac{67}{80}.
\]
Whenever $S(X_{<t})=\nu_\theta$, the next-step prediction is the biased
Bernoulli law, whereas the true conditional distribution is
\[
\mu(\cdot\mid X_{<t})=(1/2,1/2).
\]
Thus the squared prediction error on this event is
\[
\sum_{a\in\{0,1\}}
\bigl(\mu(a\mid X_{<t})-\rho_t(a\mid X_{<t})\bigr)^2
=
2\left(\theta-\frac12\right)^2
>
0.
\]
Consequently, for all $t\ge N+1$,
\[
\mathbb E_\mu
\left[
\sum_{a\in\{0,1\}}
\bigl(\mu(a\mid X_{<t})-\rho_t(a\mid X_{<t})\bigr)^2
\right]
\ge
\frac{67}{80}\cdot
2\left(\theta-\frac12\right)^2.
\]
The right-hand side is a strictly positive constant. Summing over $t$ gives
\[
S_\infty(\mu,\rho)=+\infty.
\]
Combining the two cases proves the theorem for every $\lambda>0$ and
$\epsilon>0$.
\end{proof}

\end{document}